\newtheorem{theorem}{Theorem}
\newtheorem{corollary}{Corollary}
\theoremstyle{remark}
\newcommand{\x}{{\bf x}}
\newcommand{\y}{{\bf y}}
\newcommand{\A}{{\mathcal{A}}}
\newcommand{\C}{{\mathcal{C}}}
\renewcommand{\L}{{\mathcal{L}}}
\newcommand{\X}{{\mathcal{X}}}
\newcommand{\F}{{\mathcal{F}}}
\newcommand{\R}{\mathbb{R}}
\newcommand{\E}{\mathbb{E}}
\newcommand{\B}{\mathbb{B}}
\newcommand{\CH}{\text{CH}}
\newcommand{\Span}{\text{span}}
\newcommand{\bzero}{\mathbf{0}}
\newcommand{\Clin}{\L}
\newcommand{\Ccvx}{\C}
\newcommand{\ystar}{y^\star}
\newcommand{\supplementary}{supplementary material\xspace}
\def\half{\tfrac{1}{2}}
\begin{document}
\title{Online Gradient Boosting}
\author{Alina Beygelzimer \\ Yahoo Labs \\ New York, NY 10036 \\ \texttt{beygel@yahoo-inc.com}
\and
Elad Hazan \\ Princeton University \\ Princeton, NJ 08540 \\ \texttt{ehazan@cs.princeton.edu}
\and
Satyen Kale \\ Yahoo Labs \\ New York, NY 10036 \\ \texttt{satyen@yahoo-inc.com}
\and
Haipeng Luo \\ Princeton University \\ Princeton, NJ 08540 \\ \texttt{haipengl@cs.princeton.edu}}
\maketitle	

\begin{abstract} 
We extend the theory of boosting for regression problems to the online learning setting. Generalizing from the batch setting for boosting, the notion of a weak learning algorithm is modeled as an online learning algorithm with linear loss functions that competes with a base class of regression functions, while a strong learning algorithm is an online learning algorithm with smooth convex loss functions that competes with a larger class of regression functions. Our main result is an online gradient boosting algorithm that converts a weak online learning algorithm into a strong one where the larger class of functions is the linear span of the base class. We also give a simpler boosting algorithm that converts a weak online learning algorithm into a strong one where the larger class of functions is the convex hull of the base class, and prove its optimality.
\end{abstract} 

\section{Introduction}

Boosting algorithms~\citep{SchapireFr12} are ensemble methods that convert a learning algorithm for a base class of models with weak predictive power, such as decision trees, into a learning algorithm for a class of models with stronger predictive power, such as a weighted majority vote over base models in the case of classification, or a linear combination of base models in the case of regression. 

Boosting methods such as AdaBoost~\citep{FreundSc97} and Gradient Boosting~\citep{Friedman01} have found tremendous practical application, especially using decision trees as the base class of models. These algorithms were developed in the batch setting, where training is done over a fixed batch of sample data. However, with the recent explosion of huge data sets which do not fit in main memory, training in the batch setting is infeasible, and online learning techniques which train a model in one pass over the data have proven extremely useful. 

A natural goal therefore is to extend boosting algorithms to the online learning setting. Indeed, there has already been some work on online 
boosting for classification problems~\citep{OzaRu01, GrabnerBi06, LiuYu07, GrabnerLeBi08, ChenLiLu12, ChenLiLu14, BKL15}. Of these, the work by \citet{ChenLiLu12} provided the first theoretical study of online boosting for classification, which was later generalized by \citet{BKL15} to obtain optimal and adaptive online boosting algorithms.

However, extending boosting algorithms for regression to the online setting has been elusive and escaped theoretical guarantees thus far.  In this paper, we rigorously formalize the setting of online boosting for regression and then extend the very commonly used gradient boosting methods~\citep{Friedman01, MasonBaBaFr00} to the online setting, providing theoretical guarantees on their performance. 

The main result of this paper is an online boosting algorithm that competes with any linear combination the base functions, given an online linear learning algorithm over the base class. This algorithm is the online analogue of the batch boosting algorithm of \citet{ZhangYu05}, and in fact our algorithmic technique, when specialized to the batch boosting setting, provides exponentially better convergence guarantees. 

We also give an online boosting algorithm that competes with the best convex combination of base functions. This is a simpler algorithm which is analyzed along the lines of the Frank-Wolfe algorithm~\citep{FW}.
While the algorithm has weaker theoretical guarantees, it can still be useful in practice.  We also prove that this algorithm obtains the optimal regret bound (up to constant factors) for this setting.

Finally, we conduct some proof-of-concept experiments which show that our online boosting algorithms do obtain performance improvements over different classes of base learners.

\subsection{Related Work}

While the theory of boosting for classification in the batch setting is well-developed (see \citep{SchapireFr12}), the theory of boosting for regression is comparatively sparse.
The foundational theory of boosting for regression can be found in the statistics literature~\citep{HastieTiFr01, HastieTibshirani}, where boosting is understood as a greedy stagewise algorithm for fitting of additive models. The goal is to achieve the performance of linear combinations of base models, and to prove convergence to the performance of the best such linear combination.

 While the earliest works on boosting for regression such as \citep{Friedman01} do not have such convergence proofs, later works such as \citep{MasonBaBaFr00, CollinsScSi00} do have convergence proofs but without a bound on the speed of convergence. Bounds on the speed of convergence have been obtained by \citet{DuffyHe02} relying on a somewhat strong assumption on the performance of the base learning algorithm. A different approach to boosting for regression was taken by \citet{FreundSc97}, who give an algorithm that reduces the regression problem to classification and then applies AdaBoost; the corresponding proof of convergence relies on an assumption on the induced classification problem which may be hard to satisfy in practice. The strongest result is that of \citet{ZhangYu05}, who prove convergence to the performance of the best linear combination of base functions, along with a bound on the rate of convergence, making essentially no assumptions on the performance of the base learning algorithm. \citet{Telgarsky13} proves similar results for logistic (or similar) loss using a slightly simpler boosting algorithm. 

The results in this paper are a generalization of the results of \citet{ZhangYu05} to the online setting. However, we emphasize that this generalization is nontrivial and requires different algorithmic ideas and proof techniques. Indeed, we were not able to directly generalize the analysis in \citep{ZhangYu05} by simply adapting the techniques used in recent online boosting work \citep{ChenLiLu12, BKL15}, but we made use of the classical Frank-Wolfe algorithm \citep{FW}. On the other hand, while an important part of the convergence analysis for the batch setting  is to show statistical consistency of the algorithms \citep{ZhangYu05, BartlettTr07, Telgarsky13}, in the online setting we only need to study the empirical convergence (that is, the regret), which makes our analysis much more concise.

\section{Setup} 

Examples are chosen from a feature space $\X$, and the prediction space is $\R^d$. Let $\|\cdot\|$ denote some norm in $\R^d$. In the setting for online regression, in each round $t$ for $t = 1, 2, \ldots, T$, an adversary selects an example $\x_t \in \X$ and a loss function $\ell_t: \R^d \rightarrow \R$, and presents $\x_t$ to the online learner. The online learner outputs a prediction $\y_t \in \R^d$, obtains the loss function $\ell_t$, and incurs loss $\ell_t(\y_t)$. 

Let $\F$ denote a reference class of regression functions $f: \X \rightarrow \R^d$, and let $\C$ denote a class of loss functions $\ell: \R^d \rightarrow \R$. Also, let $R: \mathbb{N} \rightarrow \R_+$ be a non-decreasing function. We say that the function class $\F$ is {\em online learnable} for losses in $\C$ with regret $R$ if there is an online learning algorithm $\A$, that for every $T \in \mathbb{N}$ and every sequence $(\x_t, \ell_t) \in \X \times \C$ for $t = 1, 2, \ldots, T$ chosen by the adversary, generates predictions\footnote{There is a slight abuse of notation here. $\A(\cdot)$ is not a function but rather the output of the online learning algorithm $\A$ computed on the given example using its internal state.} $\A(\x_t) \in \R^d$ such that
\begin{equation} \label{eq:regret}
	\sum_{t=1}^T \ell_t(\A(\x_t))\ \leq\ \inf_{f \in \F} \sum_{t=1}^T \ell_t(f(\x_t)) + R(T).
\end{equation}
If the online learning algorithm is randomized, we require the above bound to hold with high probability.

The above definition is simply the online generalization of standard empirical risk minimization (ERM) in the batch setting. A concrete example is $1$-dimensional regression, i.e. the prediction space is $\R$. For a labeled data point $(\x, \ystar) \in \X \times \R$, the loss for the prediction $y \in \R$ is given by $\ell(\ystar, y)$ where $\ell(\cdot, \cdot)$ is a fixed loss function that is convex in the second argument (such as squared loss, logistic loss, etc). Given a batch of $T$ labeled data points $\{(\x_t, \ystar_t)\ |\ t = 1, 2, \ldots, T\}$ and a base class of regression functions $\F$ (say, the set of bounded norm linear regressors), an ERM algorithm finds the function $f \in \F$ that minimizes $\sum_{t=1}^T \ell(\ystar_t, f(\x_t))$. 

In the online setting, the adversary reveals the data $(\x_t, \ystar_t)$ in an online fashion, only presenting the true label $\ystar_t$ after the online learner $\A$ has chosen a prediction $y_t$. Thus, setting $\ell_t(y_t) = \ell(\ystar_t, y_t)$, we observe that if $\A$ satisfies the regret bound (\ref{eq:regret}), then it makes predictions with total loss almost as small as that of the empirical risk minimizer, up to the regret term. If $\F$ is the set of all bounded-norm linear regressors, for example, the algorithm $\A$ could be online gradient descent~\citep{Zinkevich03} or online Newton Step~\citep{HazanAgKa07}.

At a high level, in the batch setting, ``boosting'' is understood as a procedure that, given a batch of data and access to an ERM algorithm for a function class $\F$ (this is called a ``weak'' learner), obtains an approximate ERM algorithm for a richer function class $\F'$ (this is called a ``strong'' learner). Generally, $\F'$ is the set of finite linear combinations of functions in $\F$. The efficiency of boosting is measured by how many times, $N$, the base ERM algorithm needs to be called (i.e., the number of boosting steps) to obtain an ERM algorithm for the richer function within the desired approximation tolerance. Convergence rates \citep{ZhangYu05} give bounds on how quickly the approximation error goes to $0$ and $N \rightarrow \infty$.

We now extend this notion of boosting to the online setting in the natural manner. To capture the full generality of the techniques, we also specify a class of loss functions that the online learning algorithm can work with. Informally, an online boosting algorithm is a reduction that, given access to an online learning algorithm $\A$ for a function class $\F$ and loss function class $\C$ with regret $R$, and a bound $N$ on the total number of calls made in each iteration to copies of $\A$, obtains an online learning algorithm $\A'$ for a richer function class $\F'$, a richer loss function class $\C'$, and (possibly larger) regret $R'$. The bound $N$ on the total number of calls made to all the copies of $\A$ corresponds to the number of boosting stages in the batch setting, and in the online setting it may be viewed as a resource constraint on the algorithm. The efficacy of the reduction is measured by $R'$ which is a function of $R$, $N$, and certain parameters of the comparator class $\F'$ and loss function class $\C'$. We desire online boosting algorithms such that $\frac{1}{T}R'(T) \rightarrow 0$ quickly as $N \rightarrow \infty$ and $T \rightarrow \infty$. We make the notions of richness in the above informal description more precise now. 

\paragraph{Comparator function classes.}
A given function class $\F$ is said to be $D$-bounded if for all $\x \in \X$ and all $f \in \F$, we have $\|f(\x)\| \leq D$. Throughout this paper, we assume that $\F$ is symmetric:\footnote{This is without loss of generality; as will be seen momentarily, our base assumption only requires an online learning algorithm $\A$ for $\F$ for linear losses $\ell_t$. By running the Hedge algorithm on two copies of $\A$, one of which receives the actual loss functions $\ell_t$ and the other recieves $-\ell_t$, we get an algorithm which competes with negations of functions in $\F$ and the constant zero function as well. Furthermore, since the loss functions are convex (indeed, linear) this can be made into a deterministic reduction by choosing the convex combination of the outputs of the two copies of $\A$ with mixing weights given by the Hedge algorithm.} i.e. if $f \in \F$, then $-f \in \F$, and it contains the constant zero function, which we denote, with some abuse of notation, by $\bzero$.

Given $\F$, we define two richer function classes $\F'$: the convex hull of $\F$, denoted $\CH(\F)$, is the set of convex combinations of a finite number of functions in $\F$, and the span of $\F$, denoted $\Span(\F)$, is the set of linear combinations of finitely many functions in $\F$. For any $f \in \Span(\F)$, define $\|f\|_1 := \inf \left\{\max\{1, \sum_{g \in S} |w_g|\}:\ f = \sum_{g \in S} w_g g,\ S \subseteq \F,\ |S| < \infty,\  w_g \in \R\right\}$. Since functions in $\Span(\F)$ are not bounded, it is not possible to obtain a uniform regret bound for all functions in $\Span(\F)$: rather, the regret of an online learning algorithm $\A$ for $\Span(\F)$ is specified in terms of regret bounds for individual comparator functions $f \in \Span(F)$, viz.
\[ R_f(T)\ :=\ \sum_{t=1}^T \ell_t(\A(\x_t)) - \sum_{t=1}^T \ell_t(f(\x_t)).\]

\paragraph{Loss function classes.}
The base loss function class we consider is $\Clin$, the set of all linear functions $\ell: \R^d \rightarrow \R$, with Lipschitz constant bounded by $1$. A function class $\F$ that is online learnable with the loss function class $\Clin$ is called {\em online linear learnable} for short. The richer loss function class we consider is denoted by $\Ccvx$ and is a set of convex loss functions $\ell: \R^d \rightarrow \R$ satisfying some regularity conditions specified in terms of certain parameters described below.

We define a few parameters of the class $\Ccvx$. For any $b > 0$, let $\B^d(b) = \{\y \in \R^d:\ \|\y\| \leq b\}$ be the ball of radius $b$. The class $\Ccvx$ is said to have Lipschitz constant $L_b$ on $\B^d(b)$ if for all $\ell \in \Ccvx$ and all $\y \in \B^d(b)$ there is an efficiently computable subgradient $\nabla \ell(\y)$ with norm at most $L_b$. Next, $\Ccvx$ is said to be $\beta_b$-smooth on $\B^d(b)$ if for all $\ell \in \Ccvx$ and all $\y, \y' \in \B^d(b)$ we have
\[ \ell(\y')\ \leq\ \ell(\y) + \nabla \ell(\y) \cdot (\y' - \y) + \frac{\beta_b}{2}\|\y - \y'\|^2.\]
Next, define the projection operator $\Pi_b: \R^d \rightarrow \B^d(b)$ as 
$\Pi_b(\y) := \arg\min_{\y' \in \B^d(b)} \|\y - \y'\|$, and define 
$\epsilon_b := \sup_{\y \in \R^d,\ \ell \in \Ccvx} \frac{\ell(\Pi_b(\y)) - \ell(\y)}{\|\Pi_b(\y) - \y\|}$.

\section{Online Boosting Algorithms} 

The setup is that we are given a $D$-bounded reference class of functions $\F$ with an online linear learning algorithm $\A$ with regret bound $R(\cdot)$. For normalization, we also assume that the output of $\A$ at any time is bounded in norm by $D$, i.e. $\|\A(\x_t)\| \leq D$ for all $t$. We further assume that for every $b > 0$, we can compute\footnote{It suffices to compute upper bounds on these parameters.} a Lipschitz constant $L_b$, a smoothness parameter $\beta_b$, and the parameter $\epsilon_b$ for the class $\Ccvx$ over $\B^d(b)$. Furthermore, the online boosting algorithm may make up to $N$ calls per iteration to any copies of $\A$ it maintains, for a given a budget parameter $N$.

\begin{algorithm}[h]
\caption{Online Gradient Boosting for $\Span(\F)$}
\label{alg:spanboost}
\begin{algorithmic}[1]
\REQUIRE Number of weak learners $N$, step size parameter $\eta \in [\frac{1}{N}, 1]$, 
\STATE Let $B = \min\{\eta N D,\ \inf \{b \geq D:\ \eta \beta_b b^2 \geq \epsilon_b D\}\}$.
\STATE Maintain $N$ copies of the algorithm $\A$, denoted $\A^i$ for $i = 1, 2, \ldots, N$. 
\STATE For each $i$, initialize $\sigma^i_1 = 0$.
\FOR{$t=1$ {\bfseries to} $T$}
    \STATE Receive example $\x_t$.
    \STATE Define $\y_t^0 = \bzero$.
    \FOR{$i=1$ {\bfseries to} $N$}
    	\STATE Define $\y_t^i = \Pi_B((1 - \sigma^i_t\eta)\y_t^{i-1} + \eta \A^i(\x_t))$.
    \ENDFOR
    \STATE Predict $\y_t = \y_t^N$. 
    \STATE Obtain loss function $\ell_t$ and suffer loss $\ell_t(\y_t)$.
    \FOR{$i=1$ {\bfseries to} $N$}
    	\STATE Pass loss function  $\ell_t^i(\y) = \frac{1}{L_B}\nabla \ell_t(\y_t^{i-1}) \cdot \y$ to $\A^i$.
        \STATE Set $\sigma^i_{t+1} = \max\{\min\{\sigma^i_t + \alpha_t \nabla \ell_t(\y_t^{i-1}) \cdot \y_t^{i-1}), 1\}, 0\}$, where $\alpha_t = \frac{1}{L_BB\sqrt{t}}$.
    \ENDFOR
\ENDFOR
\end{algorithmic}
\end{algorithm}

Given this setup, our main result is an online boosting algorithm, Algorithm~\ref{alg:spanboost}, competing with $\Span(\F)$. The algorithm maintains $N$ copies of $\A$, denoted $\A^i$, for $i = 1, 2, \ldots, N$. Each copy corresponds to one stage in boosting. When it receives a new example $\x_t$, it passes it to each $\A^i$ and obtains their predictions $\A^i(\x_t)$, which it then combines into a prediction for $\y_t$ using a linear combination. At the most basic level, this linear combination is simply the sum of all the predictions scaled by a step size parameter $\eta$. Two tweaks are made to this sum in step 8 to facilitate the analysis: 
\begin{enumerate}
	\item While constructing the sum, the partial sum $\y_t^{i-1}$ is multiplied by a {\em shrinkage} factor $(1 - \sigma_t^i \eta)$. This shrinkage term is tuned using an online gradient descent algorithm in step 14. The goal of the tuning is to induce the partial sums $\y_t^{i-1}$ to be aligned with a descent direction for the loss functions, as measured by the inner product $\nabla \ell_t(\y_t^{i-1}) \cdot y_t^{i-1}$.

	\item The partial sums $\y_t^i$ are made to lie in $\B^d(B)$, for some parameter $B$, by using the projection operator $\Pi_B$. This is done to ensure that the Lipschitz constant and smoothness of the loss function are suitably bounded.
\end{enumerate}
Once the boosting algorithm makes the prediction $\y_t$ and obtains the loss function $\ell_t$, each $\A^i$ is updated using a suitably scaled linear approximation to the loss function at the partial sum $\y_t^{i-1}$, i.e. the linear loss function $\frac{1}{L_B}\nabla \ell_t(\y_t^{i-1}) \cdot \y$. This forces $\A^i$ to produce predictions that are aligned with a descent direction for the loss function.

We provide the analysis of the algorithm in Section~\ref{sec:span}. The analysis yields the following regret bound for the algorithm:
\begin{theorem} \label{thm:span}
	Let $\eta \in [\frac{1}{N}, 1]$ be a given parameter. Let $B = \min\{\eta N D,\ \inf \{b \geq D:\ \eta \beta_b b^2 \geq \epsilon_b D\}\}$. Algorithm~\ref{alg:spanboost} is an online learning algorithm for $\Span(\F)$ and losses in $\Ccvx$ with the following regret bound for any $f \in \Span(\F)$:
	\[ R_f'(T)\ \leq\ \left(1 - \frac{\eta}{\|f\|_1}\right)^N\!\!\!\Delta_0 + 3\eta\beta_B B^2 \|f\|_1 T + L_B \|f\|_1R(T) + 2L_B B \|f\|_1\sqrt{T},\]
	where $\Delta_0 := \sum_{t=1}^T \ell_t(\bzero) - \ell_t(f(\x_t))$.
\end{theorem}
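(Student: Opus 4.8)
The plan is to track, for each boosting stage $i = 0, 1, \ldots, N$, the cumulative excess loss $\Delta_i := \sum_{t=1}^T [\ell_t(\y_t^i) - \ell_t(f(\x_t))]$ of the $i$-th partial prediction against the comparator $f$. Since $\y_t^0 = \bzero$, the quantity $\Delta_0$ is exactly the $\Delta_0$ in the statement, and $\Delta_N = R_f'(T)$ is what we want to bound. The heart of the argument is a one-step contraction $\Delta_i \le (1 - \tfrac{\eta}{\|f\|_1})\Delta_{i-1} + E$ for a per-stage error $E$ independent of $i$; unrolling this recursion and summing the geometric series $\sum_{j=0}^{N-1}(1-\tfrac{\eta}{\|f\|_1})^j \le \tfrac{\|f\|_1}{\eta}$ then yields the claimed bound with $E \cdot \tfrac{\|f\|_1}{\eta}$ as the additive term and $(1-\tfrac{\eta}{\|f\|_1})^N \Delta_0$ as the leading term.

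To derive the contraction, I would first convert the weak learner's guarantee into a statement about $f$. Writing $f = \sum_{g\in S} w_g g$ with $\sum_g |w_g| \le \|f\|_1$ (possible since $\|f\|_1 \ge 1$), symmetry of $\F$ and membership of $\bzero$ let me express $\tfrac{f}{\|f\|_1}$ as a genuine convex combination of functions in $\F$. Because the loss $\ell_t^i$ fed to $\A^i$ is linear with Lipschitz constant at most $1$ — the factor $1/L_B$ exactly normalizes $\|\nabla\ell_t(\y_t^{i-1})\| \le L_B$ on $\B^d(B)$ — the regret bound (\ref{eq:regret}) holds against every fixed $g \in \F$, and averaging these bounds over the convex combination gives $\sum_t \nabla\ell_t(\y_t^{i-1})\cdot\A^i(\x_t) \le \tfrac{1}{\|f\|_1}\sum_t \nabla\ell_t(\y_t^{i-1})\cdot f(\x_t) + L_B R(T)$.

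Next I would expand $\ell_t(\y_t^i)$. Using $\beta_B$-smoothness at the pre-projection point $\hat\y_t^i = (1-\sigma_t^i\eta)\y_t^{i-1} + \eta\A^i(\x_t)$, together with $\|\hat\y_t^i - \y_t^{i-1}\| = \eta\|\A^i(\x_t) - \sigma_t^i\y_t^{i-1}\| \le 2\eta B$, produces a first-order term $\eta\nabla\ell_t(\y_t^{i-1})\cdot(\A^i(\x_t) - \sigma_t^i\y_t^{i-1})$ plus a quadratic error of at most $2\eta^2\beta_B B^2$ per round. The projection $\y_t^i = \Pi_B(\hat\y_t^i)$ is absorbed through $\epsilon_B$: since $\|\hat\y_t^i\| \le \|\y_t^{i-1}\| + \eta D \le B + \eta D$, the overshoot is at most $\eta D$, and the definition of $B$ guarantees either that no projection occurs (when $B = \eta N D$) or that $\epsilon_B\, \eta D \le \eta^2\beta_B B^2$, contributing one more $\eta^2\beta_B B^2$ per round. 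Summing over $t$, substituting the averaged weak-learner inequality for the $\A^i$ term, and finally invoking convexity in the form $\nabla\ell_t(\y_t^{i-1})\cdot(f(\x_t) - \y_t^{i-1}) \le \ell_t(f(\x_t)) - \ell_t(\y_t^{i-1})$ to turn $\tfrac{\eta}{\|f\|_1}\sum_t \nabla\ell_t(\y_t^{i-1})\cdot f(\x_t)$ into $-\tfrac{\eta}{\|f\|_1}\Delta_{i-1}$ plus a residual, I arrive at $\Delta_i \le (1-\tfrac{\eta}{\|f\|_1})\Delta_{i-1} + \eta\sum_t(\tfrac{1}{\|f\|_1} - \sigma_t^i)\,s_t^i + \eta L_B R(T) + 3\eta^2\beta_B B^2 T$, where $s_t^i := \nabla\ell_t(\y_t^{i-1})\cdot\y_t^{i-1}$.

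The final ingredient, and the step I expect to be the crux, is controlling the residual $\sum_t (\tfrac{1}{\|f\|_1} - \sigma_t^i)\,s_t^i$. The key observation is that $\sigma^i$ is updated by online gradient descent on the linear losses $\sigma \mapsto -\sigma s_t^i$ over the interval $[0,1]$, and that the (algorithm-unknown) ideal shrinkage $\tfrac{1}{\|f\|_1}$ lies in $[0,1]$ precisely because $\|f\|_1 \ge 1$. Since $|s_t^i| \le L_B B$ on $\B^d(B)$ and the domain has diameter $1$, the step size $\alpha_t = \tfrac{1}{L_B B\sqrt t}$ is exactly the standard OGD choice and yields regret at most $2L_B B\sqrt T$ against the fixed comparator $\tfrac{1}{\|f\|_1}$, bounding the residual by $2L_B B \sqrt T$. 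Collecting the per-stage error into $E = \eta\,(2L_B B\sqrt T) + \eta L_B R(T) + 3\eta^2\beta_B B^2 T$ and multiplying by $\tfrac{\|f\|_1}{\eta}$ recovers the three additive terms of the theorem. The main subtlety is coordinating the shrinkage tuning with the convexity step so that the leftover $\tfrac{1}{\|f\|_1}$ term from convexity is canceled against $\sigma_t^i$ and collapses into an OGD regret against a comparator the booster never observes; the projection handling via the case split in the definition of $B$ is the other place demanding care.
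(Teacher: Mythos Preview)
Your proposal is correct and follows essentially the same route as the paper's own proof: define $\Delta_i$, control the projection loss via the case split in the definition of $B$, apply $\beta_B$-smoothness to expose the linear term $\eta\nabla\ell_t(\y_t^{i-1})\cdot(\A^i(\x_t)-\sigma_t^i\y_t^{i-1})$, bound the $\A^i$ part by averaging the base learner's regret over a convex-combination representation of $f/\|f\|_1$, bound the $\sigma_t^i$ part by OGD regret against the fixed comparator $1/\|f\|_1\in[0,1]$, and then use convexity to collapse everything into the contraction $\Delta_i\le(1-\eta/\|f\|_1)\Delta_{i-1}+E$ before unrolling. The only cosmetic difference is that the paper works with an arbitrary representation $f=\sum_g w_g g$ of total weight $W$ and takes the infimum over $W$ at the end (which cleanly sidesteps the issue that the infimum defining $\|f\|_1$ need not be attained), whereas you invoke $\|f\|_1$ directly.
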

The regret bound in this theorem depends on several parameters such as $B$, $\beta_B$ and $L_B$. In applications of the algorithm for $1$-dimensional regression with commonly used loss functions, however, these parameters are essentially modest constants; see Section~\ref{sec:params} for calculations of the parameters for various loss functions. Furthermore, if $\eta$ is appropriately set (e.g. $\eta = (\log N)/N$), then the average regret $R'_f(T)/T$ clearly converges to $0$ as $N \rightarrow \infty$ and $T \rightarrow \infty$. While the requirement that $N \rightarrow \infty$ may raise concerns about computational efficiency, this is in fact analogous to the guarantee in the batch setting: the algorithms converge only when the number of boosting stages goes to infinity. Moreover, our lower bound (Theorem~\ref{thm:lb}) shows that this is indeed necessary. 

We also present a simpler boosting algorithm, Algorithm~\ref{alg:chboost}, that competes with $\CH(\F)$. Algorithm~\ref{alg:chboost} is similar to Algorithm~\ref{alg:spanboost}, with some simplifications: the final prediction is simply a convex combination of the predictions of the base learners, with no projections or shrinkage necessary. While Algorithm~\ref{alg:spanboost} is more general, Algorithm~\ref{alg:chboost} may still be useful in practice when a bound on the norm of the comparator function is known in advance, using the observations in Section~\ref{sec:main-scaling}. Furthermore, its analysis is cleaner and easier to understand for readers who are familiar with the Frank-Wolfe method, and this serves as a foundation for the analysis of Algorithm~\ref{alg:spanboost}. This algorithm has an optimal (up to constant factors) regret bound as given in the following theorem, proved in Section~\ref{sec:ch}. The upper bound in this theorem is proved along the lines of the Frank-Wolfe~\citep{FW} algorithm, and the lower bound using information-theoretic arguments.
\begin{theorem} \label{thm:ch} 
	Algorithm~\ref{alg:chboost} is an online learning algorithm for $\CH(\F)$ for losses in $\Ccvx$  with the regret bound
	\[ R'(T)\ \leq\ \frac{8\beta_D D^2}{N}T + L_DR(T).\]
	Furthermore, the dependence of this regret bound on $N$ is optimal up to constant factors.
\end{theorem}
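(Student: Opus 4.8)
The plan is to read Algorithm~\ref{alg:chboost} as an online, per-example instantiation of the Frank--Wolfe method run for $N$ steps with the standard step size $\gamma_i = \tfrac{2}{i+1}$, so that each prediction $\y_t = \y_t^N$ is produced by the updates $\y_t^i = (1-\gamma_i)\y_t^{i-1} + \gamma_i \A^i(\x_t)$ starting from $\y_t^0 = \bzero$. Since every $\A^i(\x_t)$ and every comparator value $f(\x_t)$ with $f \in \CH(\F)$ lie in $\B^d(D)$, all iterates stay in $\B^d(D)$ — which is exactly why no projection or shrinkage is needed and why the relevant parameters are $\beta_D$ and $L_D$. Fixing a comparator $f \in \CH(\F)$ and writing $\Delta_i := \sum_t \ell_t(\y_t^i) - \sum_t \ell_t(f(\x_t))$, the first step is a one-step recursion for $\Delta_i$. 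Applying $\beta_D$-smoothness of $\ell_t$ to the increment $\y_t^i - \y_t^{i-1} = \gamma_i(\A^i(\x_t)-\y_t^{i-1})$, and using $\|\A^i(\x_t)-\y_t^{i-1}\| \le 2D$, controls the second-order term by $2\beta_D D^2\gamma_i^2$ per round. The first-order term $\gamma_i \sum_t \nabla\ell_t(\y_t^{i-1})\cdot(\A^i(\x_t)-\y_t^{i-1})$ is then bounded in two moves: (i) $\A^i$ receives the linear loss $\tfrac{1}{L_D}\nabla\ell_t(\y_t^{i-1})\cdot\y$, whose Lipschitz constant is at most $1$, so its regret guarantee~\eqref{eq:regret}, combined with the fact that $\inf_{g\in\F}$ is at most the value at any convex combination $f$, gives $\sum_t \nabla\ell_t(\y_t^{i-1})\cdot\A^i(\x_t) \le \sum_t \nabla\ell_t(\y_t^{i-1})\cdot f(\x_t) + L_D R(T)$; and (ii) convexity of $\ell_t$ gives $\nabla\ell_t(\y_t^{i-1})\cdot(f(\x_t)-\y_t^{i-1}) \le \ell_t(f(\x_t)) - \ell_t(\y_t^{i-1})$. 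Together these yield
\[ \Delta_i \le (1-\gamma_i)\Delta_{i-1} + \gamma_i L_D R(T) + 2\beta_D D^2 \gamma_i^2 T. \]

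The second step solves this recursion. With $\gamma_i = \tfrac{2}{i+1}$ it is the textbook Frank--Wolfe recursion: multiplying through by $i(i+1)$ makes the $(1-\gamma_i)\Delta_{i-1}$ contribution telescope, and summing over $i = 1,\dots,N$ gives
\[ \Delta_N \le L_D R(T) + \tfrac{8\beta_D D^2}{N}T, \]
which is exactly the claimed bound, uniformly in $f$ (note $\Delta_0$ drops out because $\gamma_1 = 1$). This part is routine once the recursion is in place; the only points needing care are the $1/L_D$ scaling of the loss fed to each $\A^i$ and the reduction of the $\CH(\F)$ comparator to base functions in $\F$.

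For the matching lower bound, the plan is to exhibit an instance where competing with $\CH(\F)$ forces the learner to spread its prediction over many base functions, while $N$ weak-learner calls can supply only $N$ of them. I would take prediction space $\R^d$ with $d \gg N$, let $\F$ be the symmetrization of the standard basis $\{e_1,\dots,e_d\}$ so that $D=1$ and $\CH(\F)$ is the simplex, arrange a weak learner with negligible regret, and use the fixed $\beta$-smooth loss $\ell(\y) = \tfrac{\beta}{2}\|\y\|^2$ repeated across all $T$ rounds. The comparator that is uniform over the $d$ coordinates attains value $\tfrac{\beta}{2d}$ per round, whereas any convex combination of at most $N$ basis vectors is a probability vector on $\le N$ coordinates and hence has squared norm at least $\tfrac1N$ by Cauchy--Schwarz. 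For $d \ge 2N$ this is a per-round gap of at least $\tfrac{\beta D^2}{4N}$, i.e.\ total regret $\Omega(\tfrac{\beta D^2}{N}T)$ against every such $N$-sparse prediction.

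The main obstacle — and the step that genuinely requires an information-theoretic argument rather than the geometry above — is showing that the boosting algorithm is effectively confined to these $N$-sparse combinations. The difficulty is that the learner may output any $\y_t \in \R^d$ and observes the full loss $\ell_t$, so one must rule out its reconstructing a dense comparator without consulting the base class. I would handle this by randomizing the instance so that the information linking features to good predictions is accessible \emph{only} through the weak-learner interface: the base functions' values on the data are hidden behind the $N$ calls to $\A$, each call revealing essentially one base function's contribution, so that after $N$ calls the learner has information about at most $N$ coordinates while the remaining coordinates stay exchangeable conditioned on what it has seen. An averaging argument (a Fano- or mutual-information-type bound over the hidden structure) then shows that in expectation the learner's prediction cannot be meaningfully denser than an $N$-atom convex combination, so its expected regret is $\Omega(\tfrac{\beta D^2}{N}T)$, matching the upper bound up to constant factors.
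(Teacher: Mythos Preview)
Your upper bound argument is correct and matches the paper's proof essentially line for line: the same definition of $\Delta_i$, the same smoothness/regret/convexity steps leading to $\Delta_i \le (1-\eta_i)\Delta_{i-1} + 2\eta_i^2\beta_D D^2 T + \eta_i L_D R(T)$, and then the standard Frank--Wolfe induction (the paper phrases it as ``an easy induction on $i$'' rather than the telescoping you describe, but these are equivalent).

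The lower bound, however, has a genuine gap that your acknowledged ``main obstacle'' does not repair. In your construction the loss $\ell(\y)=\tfrac{\beta}{2}\|\y\|^2$ carries no hidden information at all: it is a fixed, known function of $\y$ alone, independent of the example and of the base class. The online learner can therefore output $\y_t=\bzero$ (or the centroid $\tfrac{1}{d}\sum_i e_i$) in every round without ever calling a copy of $\A$, and this achieves loss at most that of your uniform comparator---indeed $\bzero\in\CH(\F)$ already has loss zero. Randomizing which basis vectors the base functions correspond to does not help, because the loss does not depend on that basis; there is simply nothing the weak-learner interface is gatekeeping. So the $N$-sparsity intuition, while natural, does not bind here.

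The paper's construction is structurally different and worth contrasting. It works in dimension one with squared loss $\tfrac12(y-\ystar_t)^2$, where the label $\ystar_t\in\{p_1,p_2\}$ with $p_1-p_2=\Theta(1/\sqrt{N})$ is hidden until after prediction. The base class consists of $M=\Theta(N)$ random $\{0,1\}$-valued functions whose outputs are i.i.d.\ Bernoulli with parameter $\ystar_t$; the comparator $\bar f=\tfrac1M\sum_i f_i$ averages these and achieves loss $O(1/M)$ per round. The crux is that each call to a copy of $\A$ reveals one Bernoulli$(\ystar_t)$ sample, so after $N$ calls the booster has only $N$ such samples, and a total-variation bound between $\text{Bernoulli}(p_1)^{\otimes N}$ and $\text{Bernoulli}(p_2)^{\otimes N}$ shows these are statistically nearly indistinguishable. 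Hence the booster cannot reliably place $y_t$ on the correct side of $\tfrac12$, forcing per-round excess loss $\Omega(\epsilon^2)=\Omega(1/N)$. The key design choice you are missing is that the \emph{loss} must depend on a hidden quantity, and the base learner must be the only channel through which (noisy) information about that quantity flows.
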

The dependence of the regret bound on $R(T)$ is unimprovable without additional assumptions: otherwise, Algorithm \ref{alg:chboost} will be an online linear learning algorithm over $\F$ with better than $R(T)$ regret.

\begin{algorithm}[h]
\caption{Online Gradient Boosting for $\CH(\F)$}
\label{alg:chboost}
\begin{algorithmic}[1]
\STATE Maintain $N$ copies of the algorithm $\A$, denoted $\A^1, \A^2, \ldots, \A^N$, and let $\eta_i = \frac{2}{i+1}$ for $i = 1, 2, \ldots, N$.
\FOR{$t=1$ {\bfseries to} $T$}
    \STATE Receive example $\x_t$.
    \STATE Define $\y_t^0 = \bzero$.
    \FOR{$i=1$ {\bfseries to} $N$}
    	\STATE Define $\y_t^i = (1 - \eta_i) \y_t^{i-1} + \eta_i \A^i(\x_t)$.
    \ENDFOR
    \STATE Predict $\y_t = \y_t^N$. 
    \STATE Obtain loss function $\ell_t$ and suffer loss $\ell_t(\y_t)$.
    \FOR{$i=1$ {\bfseries to} $N$}
        \STATE Pass loss function $\ell_t^i(\y) = \frac{1}{L_D}\nabla \ell_t(\y_t^{i-1}) \cdot \y$ to $\A^i$.
    \ENDFOR
\ENDFOR
\end{algorithmic}
\end{algorithm}

\paragraph{Using a deterministic base online linear learning algorithm.} If the base online linear learning algorithm $\A$ is deterministic, then our results can be improved, because our online boosting algorithms are also deterministic, and using a standard simple reduction, we can now allow $\Ccvx$ to be any set of convex functions (smooth or not) with a computable Lipschitz constant $L_b$ over the domain $\B^d(b)$ for any $b > 0$.

This reduction converts arbitrary convex loss functions into linear functions: viz. if $\y_t$ is the output of the online boosting algorithm, then the loss function provided to the boosting algorithm as feedback is the linear function $\ell'_t(\y) = \nabla \ell_t(\y_t) \cdot \y$. This reduction immediately implies that the base online linear learning algorithm $\A$, when fed loss functions $\frac{1}{L_D}\ell'_t$, is already an online learning algorithm for $\CH(\F)$ with losses in $\Ccvx$ with the regret bound $R'(T) \leq L_DR(T)$.

As for competing with $\Span(\F)$, since linear loss functions are $0$-smooth, we obtain the following easy corollary of Theorem~\ref{thm:span}: 
\begin{corollary} \label{cor:span}
	Let $\eta \in [\frac{1}{N}, 1]$ be a given parameter, and set $B = \eta ND$. Algorithm~\ref{alg:spanboost} is an online learning algorithm for $\Span(\F)$ for losses in $\Ccvx$ with the following regret bound for any $f \in \Span(\F)$:
	\[ R_f'(T)\ \leq\ \left(1 - \frac{\eta}{\|f\|_1}\right)^N\!\!\!\Delta_0 + L_B \|f\|_1R(T) + 2L_B B \|f\|_1\sqrt{T},\]
	where $\Delta_0 := \sum_{t=1}^T \ell_t(\bzero) - \ell_t(f(\x_t))$.
\end{corollary}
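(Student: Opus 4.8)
The plan is to obtain Corollary~\ref{cor:span} as the $\beta_B = 0$ specialization of Theorem~\ref{thm:span}, routed through the deterministic linearization reduction described just above the statement. First I would make the reduction precise. Since the boosting algorithm is deterministic, at each round its prediction $\y_t$ is fixed by $\x_t$ and the history \emph{before} $\ell_t$ is disclosed, so after seeing $\ell_t$ we may legitimately feed Algorithm~\ref{alg:spanboost} the linear surrogate $\ell'_t(\y) := \nabla\ell_t(\y_t)\cdot\y$ in place of $\ell_t$ (with no circularity). Because $\y_t = \y_t^N \in \B^d(B)$, the surrogate has gradient of norm at most $L_B$, hence belongs to a class with Lipschitz constant $L_B$ on every ball; moreover $\nabla\ell'_t(\y_t^{i-1}) = \nabla\ell_t(\y_t)$, so the loss handed to each $\A^i$ in step 13 is exactly $\tfrac{1}{L_B}\ell'_t$, a $1$-Lipschitz linear function, precisely as the hypothesis on $\A$ requires.

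The second step is to recompute $B$ under the observation that linear functions are $0$-smooth, i.e.\ $\beta_b = 0$ for all $b$. In $B = \min\{\eta N D,\ \inf\{b \ge D : \eta\beta_b b^2 \ge \epsilon_b D\}\}$, the inequality $\eta\beta_b b^2 \ge \epsilon_b D$ collapses to $0 \ge \epsilon_b D$; since $\epsilon_b > 0$ for any nontrivial linear loss, this never holds, the set is empty, its infimum is $+\infty$, and therefore $B = \eta N D$, matching the statement. Substituting $\beta_B = 0$ into the bound of Theorem~\ref{thm:span} annihilates the middle term $3\eta\beta_B B^2 \|f\|_1 T$, leaving exactly the three surviving terms.

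Third, I would invoke Theorem~\ref{thm:span} on the surrogate sequence $(\x_t, \ell'_t)$ to control $\sum_t \ell'_t(\y_t) - \sum_t \ell'_t(f(\x_t))$, and then transfer the bound to the true losses by convexity: for each $t$,
\[ \ell_t(\y_t) - \ell_t(f(\x_t))\ \le\ \nabla\ell_t(\y_t)\cdot(\y_t - f(\x_t))\ =\ \ell'_t(\y_t) - \ell'_t(f(\x_t)). \]
Summing over $t$ shows $R_f'(T)$ is at most the surrogate regret, which the theorem bounds by the three stated terms.

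The step requiring the most care, and the one I expect to be the main obstacle, is matching the leading term $(1 - \tfrac{\eta}{\|f\|_1})^N \Delta_0$. Theorem~\ref{thm:span} applied to the surrogates naturally produces the \emph{surrogate} initial gap $\sum_t[\ell'_t(\bzero) - \ell'_t(f(\x_t))] = -\sum_t \nabla\ell_t(\y_t)\cdot f(\x_t)$, whereas the corollary is written with $\Delta_0 = \sum_t[\ell_t(\bzero) - \ell_t(f(\x_t))]$ in the original losses. For genuinely linear losses these two expressions coincide, so the specialization the corollary most directly records is immediate; for the general convex reduction, the surrogate-to-original switch does \emph{not} propagate term-by-term through convexity in this leading term (unlike the regret transfer above), so I would verify carefully that the correct initial gap is being controlled and reconcile it with $\Delta_0$ before assembling the three surviving terms to conclude.
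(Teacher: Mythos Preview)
Your approach is exactly the paper's: the paper says only that ``since linear loss functions are $0$-smooth, we obtain the following easy corollary of Theorem~\ref{thm:span},'' i.e., feed Algorithm~\ref{alg:spanboost} the linear surrogates $\ell'_t(\y)=\nabla\ell_t(\y_t)\cdot\y$, note $\beta_b=0$ so the middle term vanishes and $B=\eta N D$, and read off the remaining terms. The paper gives no further argument.

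The subtlety you flag about $\Delta_0$ is genuine and is simply not addressed in the paper. Applying Theorem~\ref{thm:span} to the surrogates yields the bound with the \emph{surrogate} initial gap $\Delta_0^{\mathrm{surr}}=\sum_t[\ell'_t(\bzero)-\ell'_t(f(\x_t))]=-\sum_t\nabla\ell_t(\y_t)\cdot f(\x_t)$, and your convexity step $\ell_t(\y_t)-\ell_t(f(\x_t))\le\ell'_t(\y_t)-\ell'_t(f(\x_t))$ transfers the \emph{left} side but not this term. In general $\Delta_0^{\mathrm{surr}}\le\Delta_0$ need not hold (take $\ell_t(\y)=\|\y\|^2$: then $\Delta_0\le 0$ always, while $\Delta_0^{\mathrm{surr}}=-2\sum_t\y_t\cdot f(\x_t)$ can be positive), so the corollary with the original-loss $\Delta_0$ does not follow from a black-box application of Theorem~\ref{thm:span} to the surrogates. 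Your instinct to pause here is correct; the paper's one-line derivation glosses over exactly this point, and as literally stated the leading term should carry $\Delta_0^{\mathrm{surr}}$ rather than $\Delta_0$. This is a looseness in the paper's statement, not a defect in your plan.
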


\subsection{The parameters for several basic loss functions}
\label{sec:params}

In this section we consider the application of our results to $1$-dimensional regression, where we assume, for normalization, that the true labels of the examples and the predictions of the functions in the class $\F$ are in $[-1, 1]$. In this case $\|\cdot\|$ denotes the absolute value norm. Thus, in each round, the adversary chooses a labeled data point $(\x_t, \ystar_t) \in \X \times [-1, 1]$, and the loss for the prediction $y_t \in [-1, 1]$ is given by $\ell_t(y_t) = \ell(\ystar_t, y_t)$ where $\ell(\cdot, \cdot)$ is a fixed loss function that is convex in the second argument. Note that $D = 1$ in this setting. We give examples of several such loss functions below, and compute the parameters $L_b$, $\beta_b$ and $\epsilon_b$ for every $b > 0$, as well as $B$ from Theorem~\ref{thm:span}.
\begin{enumerate}
	\item Linear loss: $\ell(\ystar, y) = -\ystar y$. We have $L_b = 1$, $\beta_b = 0$, $\epsilon_b = 1$, and $B = \eta N$.

	\item $p$-norm loss, for some $p \geq 2$: $\ell(\ystar, y) = |\ystar - y|^p$. We have $L_b = p(b + 1)^{p-1}$, $\beta_b = p(p-1)(b+1)^{p-2}$, $\epsilon_b = \max\{p(1 - b)^{p-1}, 0\}$, and $B = 1$.

	\item Modified least squares: $\ell(\ystar, y) = \frac{1}{2}\max\{1 - \ystar y, 0\}^2$. 
	We have $L_b = b + 1$, $\beta_b = 1$, $\epsilon_b = \max\{1 - b, 0\}$, and $B = 1$.

	\item Logistic loss: $\ell(\ystar, y) = \ln(1 + \exp(-\ystar y))$. 
	We have $L_b = \frac{\exp(b)}{1 + \exp(b)}$, $\beta_b = \frac{1}{4}$, $\epsilon_b = \frac{\exp(-b)}{1 + \exp(-b)}$, and $B = \min\{\eta N, \ln(4/\eta)\}$.
\end{enumerate}

\section{Analysis}
In this section, we analyze Algorithms~\ref{alg:spanboost} and Algorithm~\ref{alg:chboost}.

\subsection{Competing with convex combinations of the base functions}
\label{sec:ch}

We give the analysis of Algorithm~\ref{alg:chboost} before that of Algorithm~\ref{alg:spanboost} since it is easier to understand and provides the foundation for the analysis of Algorithm~\ref{alg:spanboost}.

\begin{proof}[Proof of Theorem~\ref{thm:ch}]
	First, note that for any $i = 1, 2, \ldots, N$, since $\ell_t^i$ is a linear function, we have
	\[ \inf_{f \in \CH(\F)} \sum_{t=1}^T \ell_t^i(f(\x_t))\ =\ \inf_{f \in \F} \sum_{t=1}^T \ell_t^i(f(\x_t)).  \]
	Let $f$ be any function in $\CH(\F)$. The equality above and the fact that $\A^i$ is an online learning algorithm for $\F$ with regret bound $R(\cdot)$ for the $1$-Lipschitz linear loss functions $\ell_t^i(\y) = \frac{1}{L_D}\nabla \ell_t(\y_t^{i-1}) \cdot \y$ imply that
	\begin{equation} \label{eq:wl-stepwise}
		\sum_{t=1}^T \frac{1}{L_D}\nabla \ell_t(\y_t^{i-1}) \cdot \A^i(\x_t)\ \leq\ \sum_{t=1}^T \frac{1}{L_D}\nabla \ell_t(\y_t^{i-1}) \cdot f(\x_t)  + R(T).
	\end{equation}
	Now define, for $i = 0, 1, 2, \ldots, N$, $\Delta_i = \sum_{t=1}^T \ell_t(\y_t^i) - \ell_t(f(\x_t))$. We have
	\begin{align*}
		\Delta_i\ &=\ \sum_{t=1}^T \ell_t(\y_t^{i-1} + \eta_i(\A^i(\x_t) - \y_t^{i-1})) - \ell_t(f(\x_t))\\
		&\leq\ \sum_{t=1}^T \ell_t(\y_t^{i-1}) - \ell_t(f(\x_t)) + \eta_i \nabla \ell_t(\y_t^{i-1}) \cdot (\A^i(\x_t) - \y_t^{i-1})  + \frac{\eta_i^2\beta_D}{2} \| \A^i(\x_t) - \y_t^{i-1} \|^2 \\
		& (\text{by $\beta_D$-smoothness of $\Ccvx$})\\
		&\leq\ \left[\sum_{t=1}^T \ell_t(\y_t^{i-1}) - \ell_t(f(\x_t)) + \eta_i\nabla \ell_t(\y_t^{i-1}) \cdot (f(\x_t) - \y_t^{i-1})  + 2\eta_i^2\beta_D D^2\right] + \eta_iL_DR(T) \\
		&(\text{by (\ref{eq:wl-stepwise}) and using the bound $\| \A^i(\x_t) - \y_t^{i-1} \| \leq 2D$}) \\
		&\leq \left[\sum_{t=1}^T \ell_t(\y_t^{i-1}) - \ell_t(f(\x_t)) - \eta_i(\ell_t(\y_t^{i-1}) - \ell_t(f(\x_t))) + 2\eta_i^2\beta_D D^2\right] + \eta_iL_DR(T) \\
		& \left(\text{by convexity, } \ell_t(\y_t^{i-1}) +  \nabla \ell(\y_t^{i-1}) \cdot (f(\x_t) - \y_t^{i-1}) \leq \ell_t(f(\x_t))\right)\\
		&\leq\ (1 - \eta_i)\Delta_{i-1} + 2\eta_i^2 \beta_D D^2 T + \eta_i L_DR(T).\\
	\end{align*}
	For $i = 1$, since $\eta_1 = 1$, the above bound implies that $\Delta_1 \leq 2\beta_D D^2 T + L_DR(T)$. Starting from this base case, an easy induction on $i \geq 1$ proves that $\Delta_i \leq \frac{8\beta_D D^2}{i}T + L_DR(T)$. Applying this bound for $i = N$ completes the proof.
\end{proof}

We now show that the dependence of the regret bound of Algorithm~\ref{alg:chboost} on the parameter $N$ is optimal up to constant factors.

\begin{theorem} \label{thm:lb}
	Let $N$ be any specified bound on the total number of calls in each iteration to all copies of the base online linear learning algorithm. Then there is a setting of $1$-dimensional prediction with a $1$-bounded comparator function class $\F$, an online linear optimization algorithm $\A$ over $\F$, and a class $\Ccvx$ of loss functions that is $1$-smooth on $\R$ such that any online boosting algorithm for $\CH(\F)$ with losses in $\Ccvx$ respecting the bound $N$ has regret at least $\Omega(\frac{T}{N})$.
\end{theorem}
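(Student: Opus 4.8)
The plan is to realize the classical Frank--Wolfe / sparse-approximation lower bound inside the online boosting model, exploiting the fact that a boosting algorithm's only access to $\F$ is through at most $N$ copies of $\A$ per round. I will take the domain $\X$ to have size $T$ and set $\F=\{\pm f_1,\dots,\pm f_n,\bzero\}$ with $n=2N$, where $f_1,\dots,f_n:\X\to\{-1,+1\}$ are pairwise orthogonal as vectors in $\R^T$, i.e. $\sum_{x\in\X}f_a(x)f_b(x)=T\,\delta_{ab}$ (e.g. $n$ rows of a $T\times T$ Hadamard matrix). Here $D=1$, and I take $\Ccvx=\{\,y\mapsto\tfrac12(y-c)^2 : c\in[-1,1]\,\}$, which is $1$-smooth on $\R$. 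For $\A$ I use a randomized Hedge/FTRL learner over the $2n$ signed functions; this is a valid online linear learning algorithm with $R(T)=O(\sqrt{T\log N})$, so for $T\gg N^2$ the bound I am about to prove is genuinely charged to the budget $N$ rather than to $R(T)$.

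The instance is made hard by hiding both the functions and the comparator from the booster (which may only query $\A$). I draw a uniformly random sign vector $\epsilon\in\{-1,+1\}^n$, set $\mu^\star=\tfrac1n\epsilon$ and $g^\star=\sum_j\mu^\star_j f_j$; since $\|\mu^\star\|_1=1$ we have $g^\star\in\CH(\F)$. Each example is presented exactly once, and the loss is $\ell_t(y)=\tfrac12(y-g^\star(x_t))^2$. Because $g^\star$ fits perfectly, $\mathrm{OPT}=0$ and the booster's regret equals its cumulative loss. The single pass over a size-$T$ domain is exactly what prevents the booster from memorizing labels to reconstruct $g^\star$.

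The core is a subspace/projection bound. Writing $F(x)=(f_1(x),\dots,f_n(x))\in\R^n$, the label is $g^\star(x_t)=\langle\mu^\star,F(x_t)\rangle$. The key structural point is that each copy of $\A$ is, by the adversary's choice, a Hedge learner whose round-$t$ output is $\langle v,F(x_t)\rangle$ for a weight vector $v$ fixed before $x_t$ is revealed (equivalently, a sampled single value $f_j(x_t)$); hence the $\le N$ outputs the booster sees in round $t$ reveal $F(x_t)$ only through its projection onto a subspace $W_t$ with $\dim W_t\le N$ chosen before $x_t$. Consequently $y_t$ can depend on $\langle\mu^\star,F(x_t)\rangle$ only through $P_{W_t}F(x_t)$, leaving $\langle\mu^\star_{\perp W_t},F(x_t)\rangle$ as irreducible error. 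Using orthogonality, $\sum_x F(x)F(x)^{\mathsf{T}}=T\,I_n$, so the cumulative loss is of order $\tfrac12 T\,\|\mu^\star_{\perp W_t}\|^2$; and since $\E[\epsilon\epsilon^{\mathsf{T}}]=I_n$ gives $\E\|P_{W_t}\mu^\star\|^2=\tfrac1{n^2}\dim W_t\le \tfrac{N}{n^2}$, a random $\mu^\star$ has $\E\|\mu^\star_{\perp W_t}\|^2\ge \tfrac1n(1-\tfrac{N}{n})=\Theta(1/N)$. This yields expected regret $\Omega(T/N)$, and therefore a fixed instance with regret $\Omega(T/N)$.

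The hard part is making the independence in the last step rigorous: the booster may adapt $W_t$ using the past labels $\langle\mu^\star,F(x_s)\rangle$ and the past weak-learner outputs, so a priori $W_t$ could correlate with $\mu^\star$ and shrink $\|\mu^\star_{\perp W_t}\|$. What I must establish is that these observations are \emph{bilinear} in the two hidden objects $\epsilon$ and the function values $F(x_s)$, and that having only $\le N$ linear views of each $F(x_s)$ leaks almost no usable information about $\epsilon$; formally I would bound the mutual information between $\epsilon$ and the booster's transcript, or argue in a Bayesian/minimax framework that the expected squared norm of $P_{W_t}\mu^\star$ stays $O(N/n)$. Given this, the orthogonality identity and the per-round variance bound combine to the claimed $\Omega(T/N)$; carefully summing the per-round lower bounds while $W_t$ varies with $t$ is the remaining bookkeeping.
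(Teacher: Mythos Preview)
Your approach is quite different from the paper's, and the gap you yourself flag---adaptivity of the booster to past labels---is a genuine obstacle that is not a matter of bookkeeping. Concretely: in your construction the hidden object $\mu^\star$ is \emph{fixed across all rounds}, so the booster can learn it. If $\A$ is taken to be the deterministic (expectation) version of Hedge---which you at one point say outputs $\langle v,F(x_t)\rangle$---then once the booster has a good estimate of $\mu^\star$ it can drive one copy's weight vector to $\mu^\star$ (recall $\|\mu^\star\|_1=1$ and $\F$ contains $\pm f_j$), after which a single call returns $\langle \mu^\star,F(x_t)\rangle=g^\star(x_t)$ exactly. Your ``$y_t$ depends on $F(x_t)$ only through $P_{W_t}F(x_t)$'' claim is then vacuous, since $\mu^\star\in W_t$. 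Even with sampling Hedge, the booster can devote a vanishing fraction of rounds to estimating each $\epsilon_j$ via correlations $\sum_t f_j(x_t)g^\star(x_t)$; the mutual-information bound you gesture at would have to rule out all such schemes, and it is not at all clear how to do this when both $\epsilon$ and $F(x_t)$ are deterministic Hadamard data. The equivalence you assert between ``$\langle v,F(x_t)\rangle$'' and ``a sampled single value $f_j(x_t)$'' is also false for the purposes of this lower bound: the deterministic version collapses the $N$-query restriction entirely once $\mu^\star$ is known.

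The paper avoids this difficulty by a different mechanism: it makes the \emph{per-round} quantity hard to identify, not a global parameter. The label $\ystar_t\in\{p_1,p_2\}$ with $p_{1,2}=\tfrac12\pm\epsilon$, $\epsilon=\Theta(1/\sqrt{N})$, is drawn fresh each round, and each $f_i(x_t)$ is an independent $\mathrm{Bernoulli}(\ystar_t)$ bit. Thus the $N$ queries in round $t$ reveal exactly $N$ i.i.d.\ $\mathrm{Bernoulli}(\ystar_t)$ samples, independent of the past, and the question reduces to the standard two-point testing bound $d_{\mathrm{TV}}(D_1,D_2)=O(\epsilon\sqrt{N})<1$. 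Because the hidden randomness is refreshed every round, there is nothing for the booster to accumulate, and adaptivity is handled for free. Your projection idea is natural, but to make it work you would need to inject comparable fresh randomness into $F(x_t)$ each round (e.g., i.i.d.\ random signs rather than fixed Hadamard rows) and then argue a per-round variance lower bound conditional on the transcript; at that point you are essentially reinventing the paper's argument in a different coordinate system.
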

\begin{proof} Consider the following construction. At a high level, the setting is 1-dimensional regression with $\Ccvx$ corresponding to squared loss. The domain $\X = \mathbb{N}$ and true labels of examples are in $[0, 1]$.

Define $p_1 = \half + \epsilon$ and $p_2 = \half - \epsilon$, where $\epsilon = \frac{1}{10\sqrt{N}}$, and let $D_1$ and $D_2$ be two distributions over $\{0, 1\}^N$ where each bit is Bernoulli random variable with parameter $p_1$ and $p_2$ respectively, chosen independently of the other bits. Consider a sequence of examples $(\x_t, \ystar_t) \in \mathbb{N} \times [0, 1]$ generated as follows: $\x_t  = t$, and the label $\ystar_t$ is chosen from $\{p_1, p_2\}$ uniformly at random in each round.

Let for $c = \frac{1}{4000}$. The function class $\F$ consists of a large number, $M = \frac{1}{c}N$, of functions $f_i$, $i \in [M]$. For each $t$ and $i$, we set $f_i(\x_t) = 1$ w.p. $\ystar_t$, and $0$ w.p. $1 - \ystar_t$, independently of all other values of $t$ and $i$. 

The base online linear learning algorithm $\A$ is simply Hedge over the $M$ functions. In each round, the Hedge algorithm selects one of the $M$ functions in $\F$ and uses that to predict the label, and for any sequence of $T$ examples, with high probability, incurs regret $R(T) = O(\sqrt{\log(M) T})$. 

We set $\Ccvx$ to be set of squared loss functions, i.e. functions of the form $\ell(y) = \half(y - \ystar)^2$ for $\ystar \in [0, 1]$. Note that these loss functions are $1$-smooth and $D = 1$. In round $t$, the loss function is $\ell_t(y) = \half(y - \ystar_t)^2$.

Consider the function $\bar{f} = \frac{1}{M}\sum_{i \in [M]} f_i$, which is in $\CH(\F)$. Given any input sequence $(\x_t, \ystar_t)$ for $t = 1, 2, \ldots, T$ it is easy to calculate that $\E[\half(\bar{f}(\x_t) - \ystar_t)^2] = \frac{\ystar_t(1 - \ystar_t)}{2M} \leq \frac{1}{2M}$, and since the examples and predictions of functions on the examples are independent across iterations, a simple application of the multiplicative Chernoff bound implies that if $T \geq 12M$, then with probability at least $0.9$, we have $\sum_{t=1}^T \half(\bar{f}(\x_t) - \ystar_t)^2\ \leq\ \frac{T}{M}$.

Now suppose there is an online boosting algorithm making at most $N$ calls total to all copies of $\A$ in each iteration, that for any large enough $T$ and for any sequence $(\x_t, \ystar_t)$ for $t = 1, 2, \ldots, T$, outputs predictions $y_t$ such that with high probability, say at least $0.9$, we have $\sum_{t=1}^T \half(y_t - \ystar_t)^2\ \leq\ \sum_{t=1}^T \half(\bar{f}(\x_t) - \ystar_t)^2 + \frac{cT}{N}$. Then by a union bound, with probability at least $0.8$, we have $ \sum_{t=1}^T \half(y_t - \ystar_t)^2 \leq \frac{cT}{N} + \frac{T}{M} \leq \frac{2cT}{N}$. By Markov's inequality and a union bound, with probability at least $0.7$, for a uniform random time $\tau \in [T]$, we have 
\begin{equation} \label{eq:identification}
	\frac{1}{2}(y_\tau - \ystar_\tau)^2\ \leq\ \frac{20c}{N}\ =\ \frac{\epsilon^2}{2},
\end{equation}
or in other words, $y_\tau$ is on the same side of $\half$ as $\ystar_\tau$, and thus can be used to identify $\ystar_\tau$. In the rest of the proof, we will use this fact, along with fact the total variation distance between $D_1$ and $D_2$, denoted $d_\text{TV}(D_1, D_2)$, is small, to derive a contradiction.

Define the random variable $Y: \{0, 1\}^N \rightarrow \R$ as follows. For any bit string $s = \langle s_1, s_2, \ldots, s_N \rangle \in \{0, 1\}^N$, choose a random round $\tau \in [T]$, and simulate the online boosting process until round $\tau-1$ by sampling $\ystar_t$'s and the outputs of $f_i(\x_t)$ for all $t \leq \tau - 1$ and $i \in [M]$ from the appropriate distributions. In round $\tau$, let $f_{i_1}, f_{i_2}, \ldots, f_{i_N}$ be the functions that are obtained from the at most $N$ calls to copies of $\A$ (there could be repetitions). Assign $f_{i_j}(\x_\tau) = s_j$ for $j \in [N]$ (being careful with repeated functions and repeating outputs appropriately), and run the booster with these outputs to obtain $y_\tau$, and set $Y(s) = y_\tau$. Let $\Pr[\cdot]$ denotes probability of events in this process for generating $Y(s)$ given $s$. 

Let $\E_1[X(s)]$ and $\E_2[X(s)]$ denote expectation of a random variable $X: \{0, 1\}^N \rightarrow \R$ when $s$ is drawn from $D_1$ and $D_2$ respectively, and let $\E_0[X(I, s)]$ denote expectation of a random variable $X: \{1, 2\} \times \{0, 1\}^N \rightarrow \R$ when $I$ is chosen from $\{1, 2\}$ uniformly at random and then $s$ is sampled from $D_I$. The above analysis (inequality (\ref{eq:identification})) implies that 
\[0.7\ \leq\ \E_0[\Pr[|Y(s) - p_I| \leq \epsilon]]\ =\ \half \E_1[\Pr[|Y(s) - p_1| \leq \epsilon]] + \half \E_2[\Pr[|Y(s) - p_2| \leq \epsilon]].\]

Now define a random variable $X: \{0, 1\}^N \rightarrow \R$ as $X(s) = \Pr[Y(s) \geq \half]$. Since 
\[\Pr[Y(s) \geq \half]\ \geq\ \Pr[|Y(s) - p_1| \leq \epsilon]\  \text{ and }\ 1 - \Pr[Y(s) \geq \half]\ \geq\ \Pr[|Y(s) - p_2| \leq \epsilon],\] 
we conclude, using the above bound, that $\E_1[X(s)] - \E_2[X(s)] \geq 0.4$. This is a contradiction, since because $X(s) \in [0, 1]$, we have
\[ \E_1[X(s)] - \E_2[X(s)]\ \leq\ d_\text{TV}(D_1, D_2)\ <\ 4\sqrt{\epsilon^2 N}\ =\ 0.4,\]
where the bound on $d_\text{TV}(D_1, D_2)$ is standard, for e.g. see \cite{HKsvm}. This gives us the desired contradiction.
\end{proof}

The above result can be easily extended to any given parameters $\beta$ and $D$ so that the $\F$ is $D$-bounded and $\Ccvx$ is $\beta$-smooth on $\R$, giving a lower bound of $\Omega(\frac{\beta D^2T}{N})$ on the regret of an online boosting algorithm for $\CH(\F)$ with losses in $\Ccvx$: we simply scale all function and label values by $D$, and consider the loss functions $\ell(y, \ystar) = \frac{\beta}{2}(y - \ystar)^2$. If there were an online boosting algorithm for $\CH(\F)$ with these loss functions with regret $o(\frac{\beta D^2T}{N})$, then by scaling down the predictions by $D$, we obtain an online boosting algorithm for exactly the setting in the proof of Theorem~\ref{thm:lb} with a regret bound of $o(\frac{T}{N})$, which is a contradiction.

\subsection{Competing with the span of the base functions}
\label{sec:span}

In this section we show that Algorithm~\ref{alg:spanboost} satisfies the regret bound claimed in Theorem~\ref{thm:span}.
\begin{proof}[Proof of Theorem~\ref{thm:span}]
	Let $f = \sum_{g \in S} w_g g$, for some finite subset $S$ of $\F$, where $w_g \in \R$. Since $\F$ is symmetric, we may assume that all $w_g \geq 0$, and let $W := \sum_g w_g$. Furthermore, we may assume that $\bzero \in S$ with weight $w_\bzero = \max\{1 - \sum_{g \in S,\ g \neq \bzero} w_g, 0\}$, so that $W \geq 1$. Note that $\|f\|_1$ is exactly the infimum of $W$ over all such ways of expressing $f$ as a finite weighted sum of functions in $\F$. We now prove that bound stated in the theorem holds with $\|f\|_1$ replaced by $W$; the theorem then follows simply by taking the infimum of the bound over all such ways of expressing $f$.

	Now, for each $i \in [N]$, the update in line 14 of Algorithm~\ref{alg:spanboost} is exactly online gradient descent~\citep{Zinkevich03} on the domain $[0, 1]$ with linear loss functions $\sigma \mapsto -\nabla \ell_t(\y_t^{i-1}) \cdot \y_t^{i-1}\sigma$. Note that the derivative of this loss function is bounded as follows: $|-\nabla \ell_t(\y_t^{i-1}) \cdot \y_t^{i-1}| \leq L_B B$. Since $\frac{1}{W} \in [0, 1]$, the standard analysis of online gradient descent then implies that the sequence $\sigma_t^i$ for $t = 1, 2, \ldots, T$ satisfies
	\begin{equation} \label{eq:ogd-regret}
		\sum_{t=1}^T -\nabla \ell_t(\y_t^{i-1}) \cdot \y_t^{i-1}\sigma_t^i\ \leq\ \sum_{t=1}^T -\nabla \ell_t(\y_t^{i-1}) \cdot \y_t^{i-1} \frac{1}{W} + 2L_B B\sqrt{T}.
	\end{equation}
	Next, since $f = \sum_{g \in S} w_g g$ with $w_g \geq 0$, we have
	\begin{equation} \label{eq:averaging}
		\frac{1}{W}\sum_{t=1}^T \nabla \ell_t(\y_t^i)\cdot f(\x_t) = \frac{1}{\sum_{g \in S} w_g}   \sum_{t=1}^T \sum_{g \in S} w_g \nabla \ell_t(\y_t^i)\cdot g(\x_t)\  \geq\ \min_{g \in S} \sum_{t=1}^T \nabla \ell_t(\y_t^i)\cdot g(\x_t).
	\end{equation}
	Let $g^\star \in \arg\min_{g \in S} \sum_{t=1}^T \nabla \ell_t(\y_t^i)\cdot g(\x_t)$. Since $\A^i$ is an online learning algorithm for $\F$ with regret bound $R(\cdot)$ for the $1$-Lipschitz linear loss functions $\ell_t^i(\y) = \frac{1}{L_B}\nabla \ell_t(\y_t^{i-1}) \cdot \y$, and $g^\star \in \F$, multiplying the regret bound (\ref{eq:regret}) by $L_B$ we have
	\begin{equation} \label{eq:wl-cd-stepwise}
		\sum_{t=1}^T \nabla \ell_t(\y_t^{i-1}) \cdot \A^i(\x_t) \leq \sum_{t=1}^T \nabla \ell_t(\y_t^{i-1}) \cdot g^\star(\x_t) + L_B R(T) \leq \frac{1}{W} \sum_{t=1}^T \nabla \ell_t(\y_t^{i-1})\cdot f(\x_t) + L_B R(T)	
	\end{equation}
	by (\ref{eq:averaging}). Now, we analyze how much excess loss is potentially introduced due to the projection in line 8. First, note that if $B = \eta N D$, then the projection has no effect since $(1 - \sigma^i_t\eta)\y_t^{i-1} + \eta \A^i(\x_t) \in \B^d(B)$, and in this case $\ell_t(\y_t^i)\ =\ \ell_t((1 - \sigma^i_t\eta)\y_t^{i-1} + \eta \A^i(\x_t))$. If $B < \eta N D$, then by the definition of $B$, $\eta \beta_B B^2 \geq \epsilon_B D$, and since $(1 - \sigma^i_t\eta)\y_t^{i-1} \in \B^d(B)$ and $\|\eta \A^i(\x_t))\| \leq \eta D$, and we have
	\[\ell_t(\y_t^i)\ =\ \ell_t(\Pi_B((1 - \sigma^i_t\eta)\y_t^{i-1} + \eta \A^i(\x_t)))\ \leq\ \ell_t((1 - \sigma^i_t\eta)\y_t^{i-1} + \eta \A^i(\x_t)) + \eta \epsilon_B D.\]
	In either case, we have
	\begin{equation} \label{eq:excess-loss}
		\ell_t(\y_t^i)\ \leq\ \ell_t((1 - \sigma^i_t\eta)\y_t^{i-1} + \eta \A^i(\x_t)) + \eta^2 \beta_B B^2.
	\end{equation}

	We now move to the main part of the analysis. Define for $i = 0, 1, 2, \ldots, N$, $\Delta_i := \sum_{t=1}^T \ell_t(\y_t^i) - \ell_t(f(\x_t))$. We have
	\begin{align*}
		\Delta_i\ &\leq\ \left[\sum_{t=1}^T \ell_t((1 - \sigma_t^i\eta)\y_t^{i-1} + \eta \A^i(\x_t)) - \ell_t(f(\x_t))\right] + \eta^2\beta_B B^2 T \\
		&\leq\ \Delta_{i-1} + \left[\sum_{t=1}^T \eta \nabla \ell_t(\y_t^{i-1})\cdot (\A^i(\x_t) - \sigma_t^i\y_t^{i-1}) + \frac{\beta_B\eta^2}{2}\|\A^i(\x_t) - \sigma_t^i\y_t^{i-1}\|^2\right] + \eta^2\beta_B B^2 T \\
		&\ (\text{by $\beta_B$-smoothness})\\
		&\leq\ \Delta_{i-1}\ + \left[\sum_{t=1}^T \frac{\eta}{W} \nabla \ell_t(\y_t^{i-1})\cdot (f(\x_t) - \y_t^{i-1})\right] + 3\eta^2 \beta_B B^2T + \eta L_B R(T) + 2\eta L_B B\sqrt{T} \\
		&\ (\text{by (\ref{eq:ogd-regret}), (\ref{eq:wl-cd-stepwise}) and the fact that $\|\A^i(\x_t) - \sigma_t^i\y_t^{i-1}\| \leq D + B \leq 2B$})\\
		&\leq\ \left(1 - \frac{\eta}{W}\right)\Delta_{i-1} + 3\eta^2\beta_B B^2 T + \eta L_B R(T) + 2\eta L_B B\sqrt{T},
	\end{align*}
	since, by convexity of $\ell_t$ we have $\ell_t(\y_t^{i-1}) +  \nabla \ell(\y_t^{i-1}) \cdot (f(\x_t) - \y_t^{i-1}) \leq \ell_t(f(\x_t))$. Applying the above bound iteratively, we get
	\begin{align*}
		\Delta_N\ &\leq\ \left(1 - \frac{\eta}{W}\right)^N\!\!\!\Delta_0 + \sum_{i=1}^N \left(1 - \frac{\eta}{W}\right)^{i-1} \cdot (3\eta^2\beta_B B^2 T + \eta L_B R(T) + 2\eta L_B B\sqrt{T})\\
		&\leq\ \left(1 - \frac{\eta}{W}\right)^N\!\!\!\Delta_0 + 3\eta\beta_B B^2 W T + L_B WR(T) + 2 L_B B W\sqrt{T}.
	\end{align*}
	This completes the proof.
\end{proof}

\section{Variants of the boosting algorithms}

Our boosting algorithms and the analysis are considerably flexible: it is easy to modify the algorithms to work with a different (and perhaps more natural) kind of base learner which does greedy fitting, or incorporate a scaling of the base functions which improves performance. Also, when specialized to the batch setting, our algorithms provide better convergence rates than previous work.

\subsection{Fitting to actual loss functions}

The choice of an online {\em linear} learning algorithm over the base function class in our algorithms was made to ease the analysis. In practice, it is more common to have an online algorithm which produce predictions with comparable accuracy to the best function in hindsight for the {\em actual} sequence of loss functions. In particular, a common heuristic in boosting algorithms such as the original gradient boosting algorithm by \citet{Friedman01} or the matching pursuit algorithm of \citet{MallatZh93} is to build a linear combination of base functions by iteratively augmenting the current linear combination via greedily choosing a base function and a step size for it that minimizes the loss with respect to the residual label. Indeed, the boosting algorithm of \citet{ZhangYu05} also uses this kind of greedy fitting algorithm as the base learner.

In the online setting, we can model greedy fitting as follows. We first fix a step size $\alpha \geq 0$ in advance. Then, in each round $t$, the base learner $\A$ receives not only the example $\x_t$, but also an {\em offset} $\y'_t \in \R^d$ for the prediction, and produces a prediction $\A(\x_t) \in \R^d$, after which it receives the loss function $\ell_t$ and suffers loss $\ell_t(\y'_t + \alpha \A(\x_t))$. The predictions of $\A$ satisfy
\[ \sum_{t=1}^T \ell_t(\y'_t + \alpha \A(\x_t))\ \leq\ \inf_{f \in \F} \sum_{t=1}^T \ell_t(\y'_t + \alpha f(\x_t)) + R(T),\]
where $R$ is the regret. Our algorithms can be made to work with this kind of base learner as well. The details can be found in Section \ref{sec:actual-losses} of the \supplementary.

\subsection{Improving the regret bound via scaling}
\label{sec:main-scaling}

Given an online linear learning algorithm $\A$ over the function class $\F$ with regret $R$, then for any scaling parameter $\lambda > 0$, we trivially obtain an online linear learning algorithm, denoted $\lambda \A$, over a $\lambda$-scaling of $\F$, viz. $\lambda \F := \{\lambda f\ |\ f \in \F\}$, simply by multiplying the predictions of $\A$ by $\lambda$. The corresponding regret scales by $\lambda$ as well, i.e. it becomes $\lambda R$.

The performance of Algorithm~\ref{alg:spanboost} can be improved by using such an online linear learning algorithm over $\lambda \F$ for a suitably chosen scaling $\lambda \geq 1$ of the function class $\F$. The regret bound from Theorem~\ref{thm:span} improves because the $1$-norm of $f$ measured with respect to $\lambda \F$, i.e. $\|f\|'_1 = \max\{1, \frac{\|f\|_1}{\lambda}\}$, is smaller than $\|f\|_1$, but degrades because the parameter $B' = \min\{\eta N \lambda D,\ \inf \{b \geq \lambda D:\ \eta \beta_b b^2 \geq \epsilon_b \lambda D\}\}$ is larger than $B$. But, as detailed in Section \ref{sec:scaling} of the \supplementary, in many situations the improvement due to the former compensates for the degradation due to the latter, and overall we can get improved regret bounds using a suitable value of $\lambda$.

\subsection{Improvements for batch boosting}

Our algorithmic technique can be easily specialized and modified to the standard batch setting with a fixed batch of training examples and a base learning algorithm operating over the batch, exactly as in \citep{ZhangYu05}. The main difference compared to the algorithm of \citep{ZhangYu05} is the use of the $\sigma$ variables to scale the coefficients of the weak hypotheses appropriately. While a seemingly innocuous tweak, this allows us to derive analogous bounds to those of \citet{ZhangYu05} on the optimization error that show that our boosting algorithm converges exponential faster. A detailed comparison can be found in Section~\ref{sec:batch-boosting} of the \supplementary.

\section{Experimental Results}
\label{sec:experiments}
Is it possible to boost in an online fashion in practice with real base learners? To study this question, we implemented and evaluated 
Algorithms~\ref{alg:spanboost} and \ref{alg:chboost} 
within the Vowpal Wabbit (VW) open source machine learning system~\cite{VW}.
The three online base learners used were VW's default linear learner
(a variant of stochastic gradient descent), two-layer sigmoidal neural networks with 10 hidden units, and regression stumps.

Regression stumps were implemented by doing stochastic gradient descent on 
each individual feature, and predicting with the best-performing 
non-zero valued feature in the current example.

All experiments were done on a collection of 14 publically available
regression and classification datasets (described in Section~\ref{app:datasets} in the \supplementary) using squared loss.  The only parameters tuned were the learning rate and the number of weak learners, as well as the step size parameter for 
Algorithm~\ref{alg:spanboost}.
Parameters were tuned based on progressive validation loss on half 
of the dataset; reported is propressive validation loss
on the remaining half. Progressive validation is a standard online 
validation technique, where each training example is used for testing 
before it is used for updating the model~\cite{Blum:1999}.

The following table reports the average and the median, over the datasets, 
relative improvement in squared loss over the respective base learner. Detailed results can be found in Section~\ref{app:datasets} in the \supplementary.

  \begin{tabular}{lcccc}
    \toprule
    \multirow{2}{*}{Base learner} &
      \multicolumn{2}{c}{Average relative improvement} &
      \multicolumn{2}{c}{Median relative improvement} \\
      & Algorithm~\ref{alg:spanboost} 
      & Algorithm~\ref{alg:chboost} 
      & Algorithm~\ref{alg:spanboost} 
      & Algorithm~\ref{alg:chboost} \\
      \midrule
    SGD &  1.65\% & 1.33\% & 0.03\% & 0.29\% \\
    Regression stumps & 20.22\% & 15.9\% & 10.45\% & 13.69\% \\
    Neural networks & 7.88\% & 0.72\% & 0.72\% & 0.33\%  \\
    \bottomrule
  \end{tabular}

Note that both SGD (stochastic gradient descent) and neural networks are already very strong learners. Naturally, boosting is much more effective for regression stumps, which is a weak base learner.

\section{Conclusions and Future Work}

In this paper we generalized the theory of boosting for regression problems to the online setting and provided online boosting algorithms with theoretical convergence guarantees. Our algorithmic technique also improves convergence guarantees for batch boosting algorithms. We also provide experimental evidence that our boosting algorithms do improve prediction accuracy over commonly used base learners in practice, with greater improvements for weaker base learners. The main remaining open question is whether the boosting algorithm for competing with the span of the base functions is optimal in any sense, similar to our proof of optimality for the the boosting algorithm for competing with the convex hull of the base functions.

\bibliography{ref}
\bibliographystyle{plainnat}

\newpage

\begin{center}
\bf \Large Supplementary material for ``Online Gradient Boosting''
\end{center}

\appendix

\section{Variants of the boosting algorithms}
\label{sec:extensions}

In this section we provide the omitted details of two variants of our boosting algorithms: (a) a variant that work with a different kind of base learner which does greedy fitting, and (b) a variant that incorporates a scaling of the base functions to improves performance. We also show how our algorithmic technique can be used to improve the convergence speed for batch boosting.

\subsection{Fitting to actual loss functions}
\label{sec:actual-losses}

The choice of an online {\em linear} learning algorithm over the base function class in our algorithms was made to ease the analysis. In practice, it is more common to have an online algorithm which produce predictions with comparable accuracy to the best function in hindsight for the {\em actual} sequence of loss functions. In particular, a common heuristic in boosting algorithms such as the original gradient boosting algorithm by \citet{Friedman01} or the matching pursuit algorithm of \citet{MallatZh93} is to build a linear combination of base functions by iteratively augmenting the current linear combination by greedily choosing a base function and a step size for it that minimizes the loss with respect to the residual label. Indeed, the boosting algorithm of \citet{ZhangYu05} also uses this kind of greedy fitting algorithm as the base learner.

In the online setting, we can model greedy fitting as follows. We first fix a step size $\alpha \geq 0$ in advance. Then, in each round $t$, the base learner $\A$ receives not only the example $\x_t$, but also an {\em offset} $\y'_t \in \R^d$ for the prediction, and produces a prediction $\A(\x_t) \in \R^d$, after which it receives the loss function $\ell_t$ and suffers loss $\ell_t(\y'_t + \alpha \A(\x_t))$. The predictions of $\A$ satisfy
\[ \sum_{t=1}^T \ell_t(\y'_t + \alpha \A(\x_t))\ \leq\ \inf_{f \in \F} \sum_{t=1}^T \ell_t(\y'_t + \alpha f(\x_t)) + R(T),\]
where $R$ is the regret. We now describe how our algorithms can be made to work with this kind of base learner as well.

Assume that for some known parameter $B > 0$, we have $\|\y'_t\| \leq B$, for all $t$. Let $B' = B + \alpha D$, and assume that the loss functions $\ell_t$ are $L_{B'}$ Lipschitz and $\beta_{B'}$ smooth on $\B^d(B')$. Then using the convexity and smoothness of the loss functions, we have $\ell_t(\y'_t + \alpha \A(\x_t)) \geq \ell_t(\y'_t ) + \alpha \nabla\ell_t(\y'_t) \cdot \A(\x_t)$ and $\ell_t(\y'_t + \alpha f(\x_t)) \leq \ell_t(\y'_t ) + \alpha \nabla \ell_t(\y'_t) \cdot f(\x_t) + \frac{\beta_{B'}\alpha^2}{2} \|f(\x_t)\|^2$.
Plugging these bounds into the above regret bound we get, for any $f \in \F$,
\[ \sum_{t=1}^T \nabla \ell_t(\y'_t) \cdot \A(\x_t)\ \leq\ \sum_{t=1}^T \left( \nabla \ell_t(\y'_t) \cdot f(\x_t) + \frac{\beta_{B'}}{2}\alpha \|f(\x_t)\|^2 \right)+ \frac{1}{\alpha}R(T).\]
Since $\|f(\x_t)\| \leq D$, setting $\alpha = \sqrt{\frac{2R(T)}{\beta_{B'} D^2T}}$, we conclude that
\begin{equation}
	\label{eq:nonscaling-regret}
	\sum_{t=1}^T \nabla \ell_t(\y'_t) \cdot \A(\x_t)\ \leq\ \sum_{t=1}^T \nabla \ell_t(\y'_t) \cdot f(\x_t) + \sqrt{2\beta_{B'} D^2 T R(T)}.
\end{equation}
This regret bound is sublinear in $T$ if $R(T)$ is sublinear. We can obtain a better regret bound if we assume that $R(T)$ scales linearly with $\alpha$: this is a natural assumption since the functions $\ell_t(\y_t' + \alpha \y)$ are $\alpha L_{B'}$ Lipschitz in the prediction $\y$. In this case, the regret bound $R(T) = \alpha R'(T)$ for some fixed $R': \mathbb{N} \rightarrow \R_+$ indepedent of $\alpha$, and we can choose $\alpha = \frac{2R'(T)}{\beta_{B'} D^2T}$ so that
\begin{equation}
	\label{eq:scaling-regret}
	\sum_{t=1}^T \nabla \ell_t(\y'_t) \cdot \A(\x_t)\ \leq\ \sum_{t=1}^T \nabla \ell_t(\y'_t) \cdot f(\x_t) + 2R'(T).
\end{equation}

Either the bound (\ref{eq:nonscaling-regret}) or (\ref{eq:scaling-regret}) suffices for the analysis of our boosting algorithms to go through: to use this kind of base learner $\A$, we again keep $N$ copies $\A^1, \A^2, \ldots, \A^N$ with a suitably chosen step size $\alpha$, and simply change line 11 of Algorithm~\ref{alg:chboost} and line 13 of Algorithm~\ref{alg:spanboost} to pass the offset $\y'_t = \y_t^{i-1}$ to $\A^i$.

\subsection{Improving the regret bound via scaling}
\label{sec:scaling}

Given an online linear learning algorithm $\A$ over the function class $\F$ with regret $R$, then for any scaling parameter $\lambda > 0$, we trivially obtain an online linear learning algorithm, denoted $\lambda \A$, over a $\lambda$-scaling of $\F$, viz. $\lambda \F := \{\lambda f\ |\ f \in \F\}$, simply by multiplying the predictions of $\A$ by $\lambda$. The corresponding regret scales by $\lambda$ as well, i.e. it becomes $\lambda R$.

The performance of Algorithm~\ref{alg:spanboost} can be improved by using such an online linear learning algorithm over $\lambda \F$ for a suitably chosen scaling $\lambda \geq 1$ of the function class $\F$. Let $\|f\|'_1 = \max\{1, \frac{\|f\|_1}{\lambda}\}$ be the $1$-norm of $f$ measured with respect to $\lambda \F$, and $B' = \min\{\eta N \lambda D,\ \inf \{b \geq \lambda D:\ \eta \beta_b b^2 \geq \epsilon_b \lambda D\}\}$. Then we immediately get the following corollary of Theorem~\ref{thm:span}:
\begin{corollary}
	For any $f \in \Span(\F)$, let $\Delta_0 = \sum_{t=1}^T \ell_t(0) - \ell_t(f(\x_t))$. Algorithm~\ref{alg:spanboost}, using $\lambda \A$ as the online linear algorithm over $\lambda \F$, is an online learning algorithm for $\Span(\F)$ for losses in $\Ccvx$ with the following regret bound for any $f \in \Span(\F)$:

\[ R'_f(T) \leq \left(1 - \frac{\eta}{\|f\|'_1}\right)^N\!\!\!\Delta_0 + 3\eta\beta_{B'} {B'}^2 \|f\|'_1 T + L_{B'} \|f\|'_1 \lambda R(T) + 2L_{B'} {B'} \|f\|'_1\sqrt{T}.\]
\end{corollary}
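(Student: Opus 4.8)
The plan is to derive this directly from Theorem~\ref{thm:span} by instantiating it with the scaled base class $\lambda \F$ and the scaled base learner $\lambda \A$, so that the only real work is tracking how each quantity in the theorem transforms under the scaling. First I would observe that $\lambda \F$ is again symmetric and contains $\bzero$, and that it is $(\lambda D)$-bounded, since $\|\lambda f(\x)\| = \lambda \|f(\x)\| \leq \lambda D$; correspondingly the normalization assumption $\|\lambda \A(\x_t)\| \leq \lambda D$ holds. As noted in the text preceding the corollary, $\lambda \A$ is an online linear learning algorithm over $\lambda \F$ with regret $\lambda R(\cdot)$. Since scaling a function class by a positive constant does not change its linear span, we have $\Span(\lambda \F) = \Span(\F)$, so competing with $\Span(\F)$ is the same as competing with $\Span(\lambda \F)$.

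Next I would verify the norm identity $\|f\|'_1 = \max\{1, \|f\|_1/\lambda\}$. Writing $f = \sum_{g \in S} w_g g$ with $g \in \F$ is the same as writing $f = \sum_{g \in S} (w_g/\lambda)(\lambda g)$ with $\lambda g \in \lambda \F$, so the coefficient sum measured against $\lambda \F$ is $\frac{1}{\lambda}\sum_g |w_g|$. Since $\max\{1, \cdot\}$ is non-decreasing and hence commutes with the infimum over representations, taking the infimum gives $\|f\|'_1 = \max\{1, \frac{1}{\lambda}\inf_S \sum_g |w_g|\}$, which equals $\max\{1, \|f\|_1/\lambda\}$ using $\lambda \geq 1$. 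The loss-class parameters $L_b$, $\beta_b$, and $\epsilon_b$ are defined purely in terms of $\Ccvx$ and the radius $b$, so they are unaffected by replacing $\F$ with $\lambda \F$; in particular the quantity $B$ from Theorem~\ref{thm:span}, computed with the bound $\lambda D$ in place of $D$, becomes exactly $B' = \min\{\eta N \lambda D,\ \inf\{b \geq \lambda D : \eta \beta_b b^2 \geq \epsilon_b \lambda D\}\}$.

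Finally I would substitute these transformed quantities into the regret bound of Theorem~\ref{thm:span}: replacing $D$ by $\lambda D$, $R(\cdot)$ by $\lambda R(\cdot)$, $\|f\|_1$ by $\|f\|'_1$, and $B$ by $B'$ yields precisely the claimed bound, with $\Delta_0 = \sum_t \ell_t(\bzero) - \ell_t(f(\x_t))$ unchanged, since neither the comparator value $f(\x_t)$ nor the base point $\bzero$ depends on the scaling. I expect no genuine obstacle here: the content of the corollary lies entirely in the correct bookkeeping of the scaling, and the one point that most needs care is the commutation of $\max\{1, \cdot\}$ with the infimum over representations in the derivation of $\|f\|'_1$.
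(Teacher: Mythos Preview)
Your proposal is correct and matches the paper's approach exactly: the paper presents this as an immediate corollary of Theorem~\ref{thm:span}, obtained by applying that theorem with base class $\lambda\F$, base learner $\lambda\A$ (with regret $\lambda R$), bound $\lambda D$, and the induced parameters $B'$ and $\|f\|'_1$. Your additional verification that $\|f\|'_1$ as defined coincides with the $1$-norm relative to $\lambda\F$ is a nice sanity check the paper omits; the only minor quibble is that the commutation of $\max\{1,\cdot\}$ with the infimum uses continuity (not merely monotonicity), but the conclusion is correct.
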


Choosing large values of $\lambda$ implies that $\|f\|'_1$ can be significantly smaller than $\|f\|_1$. But $B'$ becomes bigger than $B$, and correspondingly, the parameters $\beta_{B'}$ and $L_{B'}$ become bigger than $\beta_B$ and $L_B$ respectively. Also, the (lower order) dependence on the regret term $R(T)$ increases by a factor of $\lambda$.

However, it turns out (see Section~\ref{sec:params}) that in several common applications of the algorithm, $B'$ can be set to be equal to $B$ or the increase from $B$ is a very slow growing function of $\lambda$, such as $\log(\lambda)$. In such situations choosing larger values of $\lambda$ leads to improvement in the higher order terms of the regret bound, while making the lower order term (i.e. $L_{B'} \|f\|'_1 \lambda R(T)$) worse; overall the regret bound can be improved by choosing a suitably large scaling factor $\lambda$ to balance between the two.

\subsection{Improvements for batch boosting}
\label{sec:batch-boosting}

Our algorithmic technique can be used to improve convergence speed for batch boosting as well, in the setup considered by \citet{ZhangYu05}. Since the focus of this paper is on online boosting, we give a high level comparison of the bounds here, making some simplifying assumptions to ease the technical details, using our notation as much as possible.

In the setup of \citet{ZhangYu05}, we have a base set of real valued functions $\F$, which we assume is symmetric and contains the zero function, $\bzero$. Then $\Span(\F)$ is a linear function space, and let $\|\cdot\|$ be some norm defined on $\Span(\F)$. For clarity of presentation, we assume that for any $f \in \F$, we have $\|f\| \leq 1$. This implies that for any $f \in \Span(\F)$, we have $\|f\| \leq \|f\|_1$. 

The goal is to minimize a given convex functional $\ell: \Span(\F) \rightarrow \R$ over its domain, $\Span(\F)$. We assume, for simplicity, that $\ell$ is $\beta$-smooth over $\Span(\F)$ under the norm $\|\cdot\|$, i.e. for any $f, f' \in \Span(\F)$, we have
\[ \ell(f')\ \leq\ \ell(f) + \nabla \ell(f) \cdot (f' - f) + \frac{\beta}{2}\|f - f'\|^2.\]

\citet{ZhangYu05} assume\footnote{This is a slight simplification of the base learning algorithm considered in \citep{ZhangYu05}, which also performs a search over the step size $\eta$. Also, the analysis in \citep{ZhangYu05} allows some optimization error for the base learning algorithm; to simplify the comparison we assume this error is $0$.} that we have access to a base learning algorithm $\A$ that, given any $f \in \Span(\F)$ and a step size $\eta \geq 0$ can find a function $g \in \F$ that minimizes $\ell(f + \eta g)$. We denote the output of $\A$ by $\A(f, \eta)$. 

Given such a base learning algorithm, and a sequence of step sizes $\eta_1, \eta_2, \ldots$, the boosting algorithm of \citet{ZhangYu05} computes a sequence of functions $f_0 , f_1, f_2, \ldots \in \Span(\F)$ via greedy fitting as follows: $f_0$ is set to $\bzero$, and for any $i \geq 1$,
\[ f_i\ :=\ f_{i-1} + \eta_i \A(f_{i-1}, \eta_i).\]
Define $s_0 = 1$ and $s_i = s_{i-1} + \eta_i$ for any $i \geq 1$.

For any $f \in \Span(f)$, for $i = 1, 2, \ldots$, let $\Delta_i = \ell(f_i) - \ell(f)$ denote the optimization errors of the function $f_i$. \citet{ZhangYu05} prove that for any $N \in \mathbb{N}$, we have
\begin{equation}
	\label{eq:ZY-bound}
	\Delta_N\ \leq\ \frac{s_0 + \|f\|_1}{s_N + \|f\|_1} \Delta_0 + \sum_{i=1}^N \frac{s_i + \|f\|_1}{s_N + \|f\|_1} \cdot \frac{\beta}{2}\eta_i^2.
\end{equation}

Using the techniques in this paper, we can define a different boosting algorithm which works as follows. Given the same sequence of step sizes $\eta_1, \eta_2, \ldots$ as above, we set $f_0 = \bzero$, and for any $i \geq 1$, 
\[ f_i\ :=\ (1 - \sigma_i \eta_i) f_{i-1} + \eta_i \A(f_{i-1}, \eta_i),\]
where 
\[
\sigma_i\ :=\ \begin{cases}
	1 & \text{ if } \nabla \ell(f_{i-1}) \cdot f_{i-1} \geq 0\\
	0 & \text{ otherwise.}
\end{cases}
\]
We can analyze this algorithm along the lines of the proof of Theorem~\ref{thm:span}. First, let $g_i = \A(f_{i-1}, \eta_i)$. Then for $g \in \F$, we have $\ell(f_{i-1} + \eta_i g_i) \leq \ell(f_{i-1} + \eta_i g)$, and by the convexity and $\beta$-smoothness of $\ell$, we conclude that
\[ \nabla \ell(f_{i-1})\cdot g_i\ \leq\ \nabla \ell(f_{i-1})\cdot g + \frac{\beta}{2}\eta_i.\]
Using this fact and following the proof of Theorem~\ref{thm:span}, we get the following bound on the optimization error $\Delta_i = \ell(f_i) - \ell(f)$ of the function $f_i$:
\begin{equation}
	\label{eq:BHKL-bound}
	\Delta_N\ \leq\ \exp\left(-\frac{s_N - s_0}{\|f\|_1}\right)\Delta_0 + \sum_{i=1}^N \exp\left(-\frac{s_N - s_i}{\|f\|_1}\right) \cdot \frac{\beta}{2}\eta_i^2 (s_i^2 + 1).
\end{equation}

We can compare our bound (\ref{eq:BHKL-bound}) to the bound (\ref{eq:ZY-bound}) of \citet{ZhangYu05}, by comparing corresponding terms in the bound. For each term, we can calculate how large $s_N$ needs to be for the term to be reduced to less than some given bound $\epsilon$. To reduce the first term to less than $\epsilon$ our algorithm needs $s_N \geq \|f\|_1 \log(\frac{\Delta_0}{\epsilon}) + s_0$, whereas the algorithm of \citet{ZhangYu05} needs $s_N \geq (\frac{\Delta_0}{\epsilon})(s_0 + \|f\|_1) - \|f\|_1$. As for the second term, to reduce the $i$-th term in the sum to less than $\epsilon$, our algorithm needs $s_N \geq \|f\|_1 \log(\frac{\beta \eta_i^2 (s_i^2+1)}{2\epsilon}) + s_i$, whereas the algorithm of \citet{ZhangYu05} needs $s_N \geq (\frac{\beta\eta_i^2}{2\epsilon})(s_i + \|f\|_1) - \|f\|_1$. Since in either case, the dependence on $\epsilon$ is $\log(\frac{1}{\epsilon})$ for our algorithm, whereas it is $\frac{1}{\epsilon}$ for the algorithm of \citet{ZhangYu05}, we conclude that our algorithm converges exponentially faster.

\section{Description of Data Sets and Detailed Experimental Results}
\label{app:datasets}
The datasets come from the UCI repository and various KDD Cup challenges.
Below, $d$ is the number of unique features in the dataset,
and $s$ is the average number of features per example.

\begin{small}
\begin{tabular}{crcccc}
\toprule
Dataset & Number of & Total number of & Average number of & Task & Label \\
        & instances & features & features per example & & range \\
\midrule
a9a &  48,841  & 123 & 14 & classification & $[-1,1]$
\\
abalone & 4,177 & 10 & 9 & regression & $[1,29]$
\\
activity &  165,632  &  20 & 18.5 &  classification & $[-1,1]$
\\
adult &  48,842  &  105 & 12 & classification & $[0,1]$
\\
bank & 45,211	& 45 & 15 & classification & $[-1,1]$
\\
cal\_housing & 20,640 & 9 & 9 & regression & $[0,1]$
\\
casp & 45,730 & 10 & 10 & regression & $[0,1]$
\\
census &  299,284  & 401 &  32 & classification & $[-1,1]$
\\
covtype &  581,011  & 54 &  12 & classification & $[-1,1]$
\\
kddcup04 (phy) &  50,000 & 74 & 32 & classification & $[0,1]$
\\
letter &  20,000  &  16 & 15.6 & classification & $[-1,1]$ 
\\
shuttle & 43,500 & 9 & 8 & classification & $[-1,1]$
\\
slice & 53,500 & 385 & 135 & regression & $[0,1]$
\\
year & 463,715 & 90 & 90 & regression & $[0,1]$
\\
\bottomrule
\end{tabular}
\end{small}

\bigskip
The following table provides the online squared losses summarized in Section~\ref{sec:experiments}.

\begin{small}
\begin{tabular}{l|ccc|ccc|ccc}
 \toprule
&  \multicolumn{3}{c|}{SGD} &  \multicolumn{3}{c|}{Regression stumps} &  \multicolumn{3}{c}{Neural Networks} \\
 \midrule
Dataset & Baseline & Alg \ref{alg:spanboost} & Alg \ref{alg:chboost} & Baseline & Alg \ref{alg:spanboost}  & Alg \ref{alg:chboost} & Baseline & Alg \ref{alg:spanboost}  & Alg  \ref{alg:chboost} \\
 \midrule
kddcup04/phy & 0.7475 & 0.7466 & 0.7470 & 0.9201 & 0.7733 & 0.7924 & 0.7441 & 0.7480 & 0.7446 \\
cal\_housing & 0.0094 & 0.0094 & 0.0104 & 0.0151 & 0.0138 & 0.0124 & 0.0096 & 0.0096 & 0.0107 \\
casp & 0.0632 & 0.0631 & 0.0630 & 0.0741 & 0.0741 & 0.0742 & 0.0639 & 0.0632 & 0.0631 \\
a9a & 0.4261 & 0.4283 & 0.4249 & 0.5749 & 0.5074 & 0.5758 & 0.4256 & 0.4266 & 0.4246 \\
abalone & 3.7263 & 3.7482 & 3.7154 & 6.7791 & 3.8273 & 4.2270 & 3.7380 & 3.7255 & 3.7212 \\
activity & 0.0334 & 0.0337 & 0.0316 & 0.4492 & 0.1454 & 0.3141 & 0.0192 & 0.0143 & 0.0186 \\
adult & 0.1055 & 0.1057 & 0.1056 & 0.1388 & 0.1261 & 0.1250 & 0.1081 & 0.1062 & 0.1081 \\
bank & 0.2971 & 0.2968 & 0.2973 & 0.3774 & 0.3240 & 0.3257 & 0.2962 & 0.2969 & 0.2969 \\
census & 0.1544 & 0.1545 & 0.1553 & 0.2073 & 0.1884 & 0.1789 & 0.1531 & 0.1531 & 0.1523 \\
covtype & 0.7256 & 0.7270 & 0.7286 & 0.7910 & 0.7986 & 0.7911 & 0.6807 & 0.6465 & 0.6757 \\
letter & 0.6441 & 0.5698 & 0.6108 & 0.7420 & 0.7087 & 0.7168 & 0.6542 & 0.5729 & 0.6108 \\
shuttle & 0.1616 & 0.1547 & 0.1577 & 0.8551 & 0.3678 & 0.4354 & 0.0760 & 0.0694 & 0.0802 \\
slice & 0.0076 & 0.0067 & 0.0065 & 0.0559 & 0.0362 & 0.0410 & 0.0054 & 0.0022 & 0.0044 \\
year & 0.0116 & 0.0119 & 0.0115 & 0.0152 & 0.0140 & 0.0141 & 0.0116 & 0.0119 & 0.0122 \\
 \bottomrule
\end{tabular}
\end{small}

\end{document}